%% file: AnonymousSubmission2027.tex
\documentclass[letterpaper]{article} 
\usepackage[preprint]{aaai2027}  
\usepackage[hyphens]{url}  
\usepackage{graphicx} 
\usepackage{natbib}  
\usepackage{caption} 
\usepackage{algorithm}
\usepackage{algorithmic}

\usepackage{microtype}
\usepackage{graphicx}
\usepackage{subcaption}
\usepackage{tabularx}
\usepackage{makecell}
\usepackage{array}
\usepackage{multirow}
\usepackage{xcolor}

\usepackage{amsmath}
\usepackage{amssymb}
\usepackage{mathtools}
\usepackage{amsthm}

\usepackage{enumitem}
\input{math.tex}

\usepackage{newfloat}
\usepackage{listings}
\DeclareCaptionStyle{ruled}{labelfont=normalfont,labelsep=colon,strut=off} 
\floatstyle{ruled}
\newfloat{listing}{tb}{lst}{}
\floatname{listing}{Listing}

\usepackage{booktabs}

\title{ABO-Med:Accelerated Bilevel Optimization \\ for Few-Shot Medical Image Classification}

\author{
    Ruoxuan Shi\textsuperscript{\rm 1}\equalcontrib,
    Sheng Yang\textsuperscript{\rm 2}\equalcontrib\corresponding,
    Zhengxing Su\textsuperscript{\rm 1},
    Xiaoyang Hou\textsuperscript{\rm 2},
    Yating Liu\textsuperscript{\rm 3}\corresponding
}

\affiliations{
    \textsuperscript{\rm 1}School of Mathematics and Statistics, Lanzhou University\\
    \textsuperscript{\rm 2}Department of Statistics, University of California, Riverside\\
    \textsuperscript{\rm 3}Department of Oncology, Lanzhou University Second Hospital\\
    shirx2025@lzu.edu.cn, syang362@ucr.edu, suzhx2024@lzu.edu.cn\\
    shyannhou@gmail.com, ery\_liuytery@lzu.edu.cn
}

\begin{document}

\maketitle

\begin{abstract}
In recent years, bilevel optimization has been widely used in a variety of machine learning tasks. However, prior bilevel optimization algorithms generally require the computation of second-order information, which limits their practical scalability. Only recently has a first-order paradigm for bilevel optimization been established, attaining near-optimal theoretical guarantees for solving bilevel optimization problems. In this paper, we propose ABO-Med, a scalable instantiation of this paradigm for few-shot learning, by incorporating it into the model-agnostic meta-learning (MAML) framework and tailoring it to medical image classification. We also introduce Medical Adaptive RandomAugment (MedRAug), a modality-aware augmentation strategy designed for medical images. Theoretically, ABO-Med establishes the optimality of MAML-type meta-learning approaches. Empirically, ABO-Med outperforms prior baselines on several public medical datasets, with gains of 1.99\% to 18.76\%, while MedRAug further improves the average accuracy by 2.20\% to 6.34\%. Additional cross-domain experiments, augmentation ablation studies, backbone ablation studies, and training efficiency analysis further validate the effectiveness and efficiency of the proposed method.
\end{abstract}



\begin{figure*}[t]
    \centering
    \includegraphics[width=0.95\textwidth]{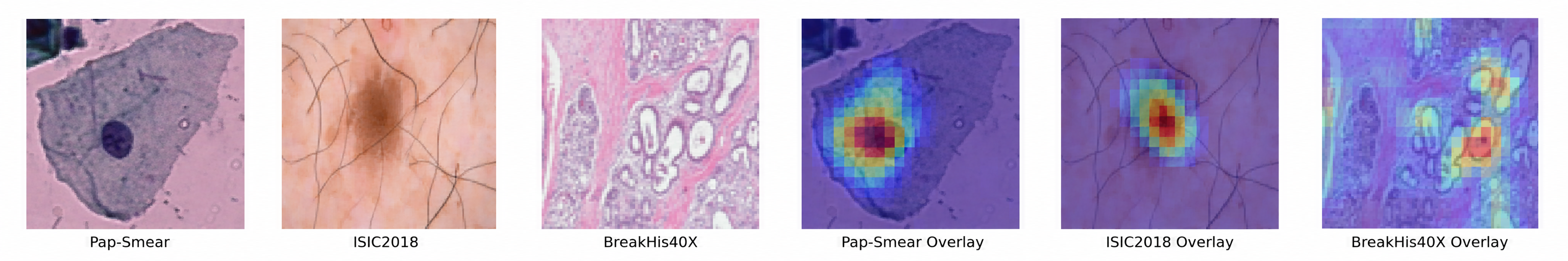}
    \caption{Grad-CAM visualization of the ABO-Med method trained under the 3-way 10-shot setting on Pap-Smear, ISIC2018, and BreakHis40X, using the baseline augmentation strategy. Warmer colors indicate regions that contribute more strongly to the model prediction.}
    \label{fig:gradcam}
\end{figure*}

\section{Introduction}
In medical image analysis and computer-aided diagnosis, acquiring large-scale, high-quality annotated data is often challenging. Medical data annotation typically relies on expert clinicians, making it expensive and time-consuming. As a result, many tasks only have limited labeled samples, which hinders the performance of deep learning models~\cite{litjens2017survey,esteva2019guide,greenspan2016guest} that rely on large-scale training data. To address this issue, few-shot learning~\cite{wang2020generalizing,vinyals2016matching,snell2017prototypical} aims to achieve effective classification or prediction with only a small number of training samples. Among various approaches, meta-learning~\cite{hospedales2021meta,huisman2021survey} aims to learn transferable prior knowledge or learning strategies across a distribution of tasks, so that a model can rapidly adapt to a new task with only a few samples. It generally consists of two stages: meta-training, where transferable knowledge is learned from task distributions, and meta-testing, where the learned knowledge is evaluated on unseen tasks.
Existing meta-learning methods can be broadly categorized into metric-based~\cite{vinyals2016matching}, model-based~\cite{santoro2016meta}, and optimization-based approaches~\cite{finn2017model}. In this paper, we focus on the optimization-based paradigm, as it is naturally aligned with optimization methods in a broad range of machine learning problems.

Model-Agnostic Meta-Learning~(MAML)~\cite{finn2017model} is a prominent optimization-based meta-learning method that learns an initialization for fast adaptation to new tasks. However, differentiating through the inner-loop updates in the outer loop can incur costly Hessian computations and high memory overhead. To improve efficiency, numerous MAML variants~\cite{nichol2018first,rajeswaran2019meta,raghu2019rapid,antoniou2018train,fallah2020convergence,baik2020meta,chayti2024new} have been proposed. Among them, ANIL~(Almost No Inner Loop)~\cite{raghu2019rapid} freezes the feature extractor during task adaptation and updates only the last layer, thereby significantly reducing computation while maintaining competitive performance. Under the ANIL framework, the meta-training stage of learning a good parameter initialization can be naturally formulated as a bilevel optimization problem. 



Meanwhile, bilevel optimization has become an important tool for addressing few-shot learning problems. Beyond classical applications such as hyperparameter optimization~\cite{domke2012generic,maclaurin2015gradient,franceschi2017forward,lorraine2020optimizing}, its recent use in large-scale LLM data reweighting~\cite{pan2025scalebio} has further demonstrated its scalability and practical importance.
In its standard formulation, bilevel optimization consists of a hierarchical two-level structure, in which the outer-level problem is defined with respect to the solution of the inner-level problem,
\begin{equation}
\label{eq:bilevel_basic_form}
\begin{aligned}
\min_{\phi \in \Phi} \quad & \mathcal{L}(\phi) = L_1\bigl(\phi, w^*\bigr) \\
\text{s.t.} \quad & w^* = \arg\min_{w} \, L_2(\phi, w).
\end{aligned}
\end{equation}
A major limitation of this standard formulation is the computational burden of hypergradient estimation. 
Under the assumptions that the lower-level objective is strongly convex and that both the upper- and lower-level objectives satisfy appropriate smoothness conditions, computing or estimating the hypergradient typically involves Hessians, Jacobians, or their vector products, resulting in substantial computational and memory costs in large-scale settings.
Recent penalty methods for bilevel optimization~\cite{kwon2023fully,chen2025near,yang2026second} reformulate the original bilevel problem into an equivalent penalized optimization problem. 
This reformulation enables optimization using only first-order gradient information, while avoiding second-order information and achieving near-optimal theoretical complexity.

This naturally leads to the following question:
\textit{Is it possible to develop a nearly optimal first-order MAML-type meta-learning method?}

To answer this question, we develop ABO-Med, a first-order bilevel meta-learning framework for medical image classification. 
Specifically, ABO-Med integrates first-order penalty methods for bilevel optimization into the ANIL framework, enabling efficient MAML-type meta-learning without relying on second-order information.
Furthermore, to address the lack of medical image specific augmentation strategies in existing meta-learning methods, ABO-Med incorporates Medical Adaptive RandomAugment (MedRAug), a modality-aware augmentation strategy tailored to different types of medical imaging data.

\subsection{Contributions}
Our primary contributions are summarized as follows:
\begin{itemize}
    \item We propose ABO-Med, a scalable and flexible framework for MAML-type meta-learning in few-shot medical image classification. We establish near-optimal convergence guarantees for ABO-Med without requiring second-order derivative computations.

    \item  We empirically demonstrate the effectiveness of ABO-Med on multiple few-shot medical image classification benchmarks. ABO-Med consistently outperforms strong baselines, achieving average gains of 1.99\%, 2.06\%, and 18.76\% on Pap-Smear, ISIC2018, and BreakHis40X, respectively.

    \item We also introduce MedRAug, a medical-adaptive augmentation strategy that further improves ABO-Med by 1.69\%, 2.72\%, and 6.01\% on Pap-Smear, ISIC2018, and BreakHis40X, respectively. Additional cross-domain, ablation, and efficiency studies further verify the effectiveness and efficiency of ABO-Med.



\end{itemize}

\subsection{Related Work}

\paragraph{Bilevel Optimization.}
Bilevel optimization algorithms can generally be divided into first-order and second-order methods. Most prior second-order methods, such as Approximate Implicit Differentiation~(AID)~\cite{ghadimi2018approximation,ji2021bilevel} and Iterative Differentiation~(ITD)~\cite{lorraine2020optimizing,domke2012generic,bolte2021nonsmooth,arbel2021amortized}, estimate the hypergradient $\nabla \mathcal{L}(\phi)$ through Hessian-vector products and Jacobian-vector products. To further improve efficiency, qNBO~\cite{fang2025qnbo} adopts a quasi-Newton scheme to approximate the hypergradient. On the other hand, RAHGD~\cite{yang2023accelerating} minimizes $\mathcal{L}(\phi)$ via inexact gradient descent or (perturbed) accelerated gradient descent.
In contrast, first-order methods rely only on gradient information. PZOBO~\cite{sow2022convergence} uses a zeroth-order-like method to approximate the response Jacobian via finite differences, and thus estimates the hypergradient. By reformulating the lower-level problem as an optimality constraint, fully first-order methods have been developed. ~\cite{liu2022bome} derived a first-order hypergradient for deterministic settings, while~\cite{kwon2023fully,chen2025near} proposed $\mathrm{F}^{2}\mathrm{BA}$, which avoids Hessian and Jacobian estimation.

\paragraph{Optimization-Based Meta-Learning.}

A representative optimization-based meta-learning method is MAML~\cite{finn2017model}, which learns an initialization that can quickly adapt to new tasks. However, its meta-update requires differentiating through the inner-loop optimization process, leading to high computational and memory costs. To reduce this complexity, several MAML variants have been proposed. FOMAML~\cite{nichol2018first} simplifies the meta-gradient by removing second-order terms, while Reptile~\cite{nichol2018first} approximates meta-gradients through parameter differences between the initial and adapted models. In another direction, ANIL~\cite{raghu2019rapid} reduces computation by updating only the classifier in the inner loop, while ALFA~\cite{baik2020meta} improves fast adaptation through adaptive inner-loop optimization.
Tra-MAML~\cite{voon2025trapezoidal} improves MAML through a trapezoidal step scheduler that adjusts inner-loop adaptation steps to balance model adaptation capacity and computational efficiency.

\begin{figure*}[t]
    \centering
    \includegraphics[width=1\textwidth]{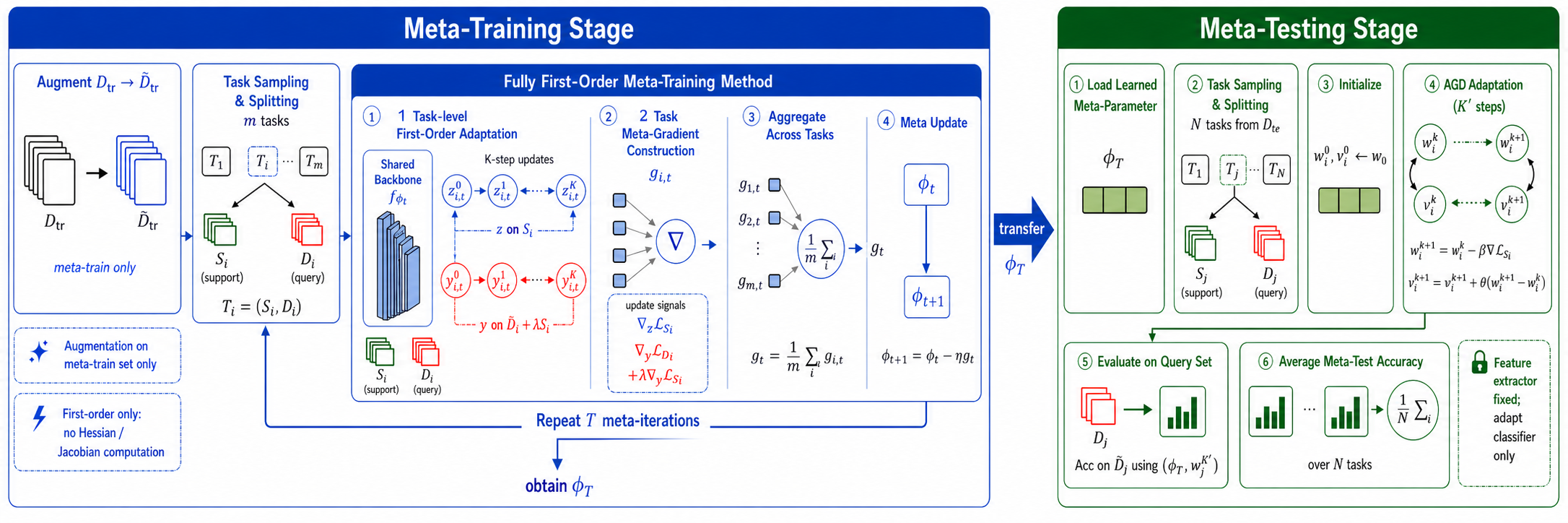}
    \caption{Overall framework of ABO-Med, including task sampling, DataAugment, fully first-order task adaptation, meta-gradient construction, and outer-loop meta-update.}
    \label{fig:framework}
\end{figure*}

\paragraph{Medical Image Augmentation.} Traditional augmentation methods~\cite{perez2018data}, such as cropping, flipping, rotation, translation, and color perturbation, are widely used to improve generalization by increasing training sample diversity. However, these methods usually require manual design and may not be equally suitable for different medical imaging modalities. AutoAugment~\cite{cubuk2019autoaugment} formulates augmentation design as a policy search problem, but its search stage is computationally expensive. RandAugment~\cite{cubuk2020randaugment} simplifies this process by removing the search stage and using only two hyperparameters: the number of operations and the augmentation magnitude. Nevertheless, augmentation policies designed for natural images may distort diagnostic structures in medical images. For pathology images, H\&E-RandomAugment~\cite{faryna2021tailoring} adapts RandAugment to pathological images by modifying its augmentation operation library and introducing H\&E stain-aware operations to better handle staining and scanner variations.

\section{Preliminaries}
We first introduce some notation used throughout the paper. For any twice differentiable function $L(\phi, w)$ with $w \in \mathbb{R}^{d_w}$ and $\phi \in \Phi$, let $\nabla_\phi L(\phi, w)$ and $\nabla_w L(\phi, w)$ denote the partial gradients with respect to $\phi$ and $w$. 
The notation $\|\cdot\|$ stands for the vector Euclidean norm. Finally, we use $\mathcal{O}(\cdot)$ to hide only absolute constants that are independent of all problem parameters, $\tilde{\mathcal{O}}(\cdot)$ to additionally suppress polylogarithmic factors, and $\Omega(\cdot)$ to denote the corresponding lower-bound order up to absolute constants.

\subsection{Problem Formulation}
\label{sec:Problem Formulation}
In few-shot classification, each task is constructed as a $k$-way, $n$-shot problem, where $k$ denotes the number of classes and $n$ denotes the number of labeled support examples per class. Thus, each task contains a small support set used for task-specific adaptation and a query set used for evaluating the adapted model.

Meta-learning proceeds over a sequence of episodes. In each episode, we sample $m$ tasks $\{\mathcal{T}_i\}_{i=1}^m$ from a task distribution $P_{\mathcal T}$, where each task $\mathcal{T}_i$ follows the above $k$-way, $n$-shot construction. For each task $\mathcal{T}_i$, let $\mathcal{S}_i$ and $\mathcal{D}_i$ denote its support and query sets, respectively. Let $\mathcal{L}(\phi,w_i;\xi)$ denote the loss on a sample $\xi$, where $\phi$ denotes the shared parameters of the embedding model and $w_i$ denotes the task-specific parameters. The goal is to learn shared parameters $\phi$ that generalize well across tasks, while each task adapts its own parameters $w_i$ based on the corresponding support set. Since the medical image datasets considered in our work are collected in advance, we focus on the finite-sum setting~\cite{antoniou2018train,raghu2019rapid}, where the task losses are defined over fixed support and query samples. The following formulation adapts Problem~\eqref{eq:bilevel_basic_form} to the few-shot meta-learning setting.
\begin{equation}
\label{eq:fewshot_learning_bilevel}
\begin{aligned}
\min_{\phi}\quad
\mathcal{L}_{\mathcal D}(\phi,\tilde w^*)
&=
\frac{1}{m}\sum_{i=1}^m
\mathcal{L}_{\mathcal D_i}(\phi,w_i^*) \\
&=
\frac{1}{m}\sum_{i=1}^m
\frac{1}{|\mathcal D_i|}
\sum_{\xi\in\mathcal D_i}
\mathcal{L}(\phi,w_i^*;\xi) \\
\text{s.t.}\quad
\tilde w^*
&=
\arg\min_{\tilde w}\,
\mathcal{L}_{\mathcal S}(\phi,\tilde w), \\
\mathcal{L}_{\mathcal S}(\phi,\tilde w)
&=
\frac{1}{m}\sum_{i=1}^m
\mathcal{L}_{\mathcal S_i}(\phi,w_i).
\end{aligned}
\end{equation}
where
$$
\mathcal{L}_{\mathcal{S}_i}(\phi,w_i)
=
\frac{1}{|\mathcal{S}_i|}
\sum_{\xi\in\mathcal{S}_i}
\mathcal{L}(\phi,w_i;\xi)
+R(w_i),
$$
with a strongly convex regularizer $R(w_i)$, e.g., an $\ell_2$ regularizer. 
Here, $\mathcal{S}_i$ and $\mathcal{D}_i$ denote the training and test datasets of task $\mathcal{T}_i$, respectively. 
The lower-level problem is equivalent to solving each $w_i^*$ as a minimizer of the task-specific support loss $\mathcal{L}_{\mathcal{S}_i}(\phi,w_i)$ for $i=1,\ldots,m$.

Here, $\tilde{w}:=(w_1,\dots,w_m)$ denotes the collection of task-specific parameters over all tasks, and $\tilde{w}^*:=(w_1^*,\dots,w_m^*)$ denotes the corresponding lower-level solution for a given shared representation parameter $\phi$. 
In our setting, $\tilde{w}$ corresponds to the parameters of the last linear layer, while $\phi$ denotes the parameters of the remaining feature extractor, such as the four convolutional layers in CNN4~\cite{snell2017prototypical} or the residual blocks in ResNet~\cite{he2016deep}. 
With the strongly convex regularizer $R(w_i)$, the lower-level objective is strongly convex with respect to $\tilde{w}$, whereas the upper-level objective $\mathcal{L}_{\mathcal{D}}(\phi,\tilde{w}^*(\phi))$ is generally nonconvex with respect to $\phi$. 
Moreover, since the support set $\mathcal{S}_i$ and query set $\mathcal{D}_i$ in few-shot learning are typically small, all task-level updates are performed using full-batch gradient descent without data resampling.

\subsection{Technical Assumptions and Definitions}
\label{sec:Technical Assumptions and Definitions}
We further assume that the loss function for each task satisfies the following standard conditions, which are typical in the literature on MAML-type optimization \cite{finn2017model,ji2020convergence,ji2022theoretical}.

\begin{assumption}
\label{assum:basic_technical_Assumpions}
The loss function $\mathcal{L}_{\mathcal S_i}(\phi,w_i)$ (inner
loop) and $\mathcal{L}_{\mathcal D_i}(\phi,w_i)$(outer loop) for each task $\mathcal T_i$ satisfy:
\begin{itemize}
    \item  $\mathcal{L}_{\mathcal S_i}(\phi,w_i)$  is $\mu$-strongly convex in $w_i$;

    \item $\mathcal{L}_{\mathcal S_i}(\phi,w_i)$ is $L_s$-gradient Lipschitz;

    \item  $\mathcal{L}_{\mathcal S_i}(\phi,w_i)$ is $\rho_s$-Hessian Lipschitz;

    \item $\mathcal{L}_{\mathcal D_i}(\phi,w_i)$ is $C_d$-Lipschitz in $w_i$;

    \item $\mathcal{L}_{\mathcal D_i}(\phi,w_i)$ is $L_d$-gradient Lipschitz;

    \item $\mathcal{L}_{\mathcal D_i}(\phi,w_i)$ is two-times continuous differentiable;

     \item $\mathcal{L}_i(\phi):=\mathcal{L}_{\mathcal D_i}(\phi,  w_i^*)$ is lower bounded, i.e.,\\ $\inf_{\phi } \mathcal{L}_i(\phi) >\! -\infty.$
\end{itemize}
\end{assumption}


\begin{definition}
\label{def:condition_number}
Under assumption~\ref{assum:basic_technical_Assumpions}, we define the largest smoothness constant $\ell := \max\{C_d, L_s, L_d, \rho_s\}$ and the condition number $\kappa := \ell / \mu$.
\end{definition}

\begin{definition}
Given a differentiable function $\mathcal{L}(\phi): \mathbb{R}^d \to \mathbb{R}$, we call $\hat{\phi}$ an $\epsilon$-first-order stationary point of $\mathcal{L}(\phi)$ if
$$
\|\nabla \mathcal{L}(\hat{\phi})\| \leq \epsilon.
$$
\end{definition}

\section{Method}

\subsection{ABO-Med}
\label{sec:ABO-Med}
\begin{algorithm}[!t]
\caption{ABO-Med}
\label{alg:ABO-Med}
\begin{algorithmic}[1]
\STATE \textbf{Input:} Meta-training dataset $D_{\mathrm{tr}}$, meta-testing dataset $D_{\mathrm{te}}$; initial parameters $\phi_0,w_0$; meta-training parameters $\eta,\alpha,\tau,\lambda,K,m,T$; meta-testing parameters $\beta,\theta,K',N$.
\STATE \textbf{Output:} The average meta-testing accuracy.
\STATE  Apply data augmentation to $D_{\mathrm{tr}}$ and obtain $\widetilde{D}_{\mathrm{tr}}$.
\STATE {\bf Meta-training Stage}
\FOR{$t=0,1,\ldots,T-1$}
    \STATE Sample and split $m$
    tasks $\{\mathcal T_i\}=(\mathcal S_i,\mathcal D_i)$ from $\widetilde{D}_{tr}$.
    \FOR{$i=1,2,\ldots,m$}
        \STATE Set $y_{i,t}^{0} \gets w_0$ and $z_{i,t}^{0} \gets w_0$.
        \FOR{$k=0,1,\ldots,K-1$}
            \STATE $z_{i,t}^{k+1} \gets z_{i,t}^{k} - \alpha \nabla_w \mathcal{L}_{\mathcal S_i}(\phi_t,z_{i,t}^{k})$.
            \STATE $y_{i,t}^{k+1} \gets y_{i,t}^{k} - \tau \bigl(\nabla_w \mathcal{L}_{\mathcal D_i}(\phi_t,y_{i,t}^{k})$
            \STATE \hspace{3.8em} $+ \lambda \nabla_w \mathcal{L}_{\mathcal S_i}(\phi_t,y_{i,t}^{k})\bigr)$.
        \ENDFOR
        \STATE $g_{i,t} \gets \nabla_{\phi}\mathcal{L}_{\mathcal D_i}(\phi_t,y_{i,t}^{K})$
        \STATE \hspace{3em} $+ \lambda \Bigl( \nabla_{\phi}\mathcal{L}_{\mathcal S_i}(\phi_t,y_{i,t}^{K}) - \nabla_{\phi}\mathcal{L}_{\mathcal S_i}(\phi_t,z_{i,t}^{K}) \Bigr)$.
    \ENDFOR
    \STATE $\phi_{t+1} \gets \phi_t - \eta \dfrac{1}{m}\sum_{i=1}^m g_{i,t}$.
\ENDFOR

\STATE {\bf Meta-testing Stage}
\STATE Load the learned meta-parameter $\phi_T$. \FOR{$i=1,2,\ldots,N$} \STATE Sample and split $N$ tasks $\mathcal T_i=(\mathcal S_i,\mathcal D_i)$ from $D_{te}$. \STATE Initialize $w_i^0 \gets w_0$ and $v_i^0 \gets w_i^0$. \FOR{$k=0,1,\ldots,K'-1$} \STATE $w_i^{k+1} \gets v_i^k - \beta \nabla_w \mathcal{L}_{\mathcal S_i}(\phi_T,v_i^k)$. \STATE $v_i^{k+1} \gets w_i^{k+1} + \theta (w_i^{k+1}-w_i^k)$. \ENDFOR \STATE Compute the accuracy on $\mathcal D_i$ using $(\phi_T,w_i^{K'})$. \ENDFOR
\STATE Report the average accuracy over the $N$ meta-test tasks.
\end{algorithmic}
\end{algorithm}

The detailed procedure of ABO-Med is presented in Algorithm~\ref{alg:ABO-Med}. 
ABO-Med consists of two stages: meta-training and meta-testing. 
The former solves a bilevel optimization problem to learn the shared meta-parameter $\phi$, while the latter evaluates the learned meta-parameter on unseen tasks. 
The meta-training parameters include the outer stepsize $\eta$, the inner learning rates $\alpha$ and $\tau$, the penalty parameter $\lambda$, the number of inner iterations $K$, the task batch size $m$, and the number of meta-training iterations $T$. 
The meta-testing parameters include the learning rate $\beta$, the momentum parameter $\theta$, the number of adaptation steps $K'$, and the number of sampled meta-testing tasks $N$.

In the meta-training stage, a mini-batch of tasks is sampled from the meta-training set at each iteration, and each task is divided into a support set and a query set. 
The meta-updates are then performed on the task-specific variables to construct approximate hypergradients, which are averaged across tasks to update the shared meta-parameter. 
In the meta-testing stage, the learned meta-parameter is fixed and transferred to unseen tasks from the meta-testing set. 
For each task, the task-specific parameter is adapted on the support set via Accelerated Gradient Descent (AGD)~\cite{nesterov2018lectures}, and the adapted model is then evaluated on the query set. 
The final performance is reported as the average accuracy over the sampled meta-testing tasks.

\subsection{Fully First-Order MAML Method}
\label{sec:F2BA}
In this section, we use the first-order penalty methods~\cite{kwon2023fully,chen2025near} to solve the bilevel optimization problem in the meta-training stage and obtain the meta-parameter $\phi_T$. We first present an equivalent penalized formulation of problem~\eqref{eq:bilevel_basic_form}.
If we let the Lagrange function be
$$
\mathcal{L}_{\lambda}(\phi,\tilde w)
:= \mathcal{L}_{\mathcal D}(\phi,\tilde w) + \lambda\bigl(\mathcal{L}_{\mathcal S}(\phi,\tilde w)-\mathcal{L}_{\mathcal S}(\phi,\tilde w^*)\bigr),
$$
where $\tilde w^*=\arg\min_{\tilde w}\,
\mathcal{L}_{\mathcal S}(\phi,\tilde w)$.~\cite{kwon2023fully} showed that Problem~\eqref{eq:bilevel_basic_form} can be effectively solved by the following formulation:
\begin{equation}
\label{eq:penalty_function}
\begin{aligned}
\min_{\phi \in \Phi} \mathcal{L}_{\lambda}^{*}(\phi)
&:= \mathcal{L}_{\lambda}\bigl(\phi,\tilde w_{\lambda}^{*}(\phi)\bigr), \\
\text{where}\quad
\tilde w_{\lambda}^{*}(\phi)
&:= \arg\min_{\tilde w} \mathcal{L}_{\lambda}(\phi,\tilde w).
\end{aligned}
\end{equation}

The penalized objective $\mathcal{L}_{\lambda}^{*}(\phi)$ provides a valid first-order approximation to the original bilevel objective $\mathcal{L}(\phi)$. 
Specifically, when $\lambda \asymp \epsilon^{-1}$, every $\epsilon$-stationary point of $\mathcal{L}_{\lambda}^{*}(\phi)$ is an $\mathcal{O}(\epsilon)$-stationary point of $\mathcal{L}(\phi)$. 
Moreover, for sufficiently large $\lambda$, the gradient Lipschitz constant of $\mathcal{L}_{\lambda}^{*}(\phi)$ becomes independent of $\lambda$. 
And the hypergradient $\nabla \mathcal{L}_{\lambda}^*(\phi)$ can be expressed as
\begin{equation}
\begin{aligned}
\nabla \mathcal{L}_{\lambda}^{*}(\phi)
&=
\nabla_\phi \mathcal{L}_{\mathcal D}\bigl(\phi,\tilde w_{\lambda}^{*}(\phi\bigr) \\
&\quad
+ \lambda\Bigl(\!
\nabla_\phi \mathcal{L}_{\mathcal S}\bigl(\phi,\tilde w_{\lambda}^{*}(\phi)\bigr)
\!-\!\nabla_\phi \mathcal{L}_{\mathcal S}\bigl(\phi,\tilde w^{*}(\phi)\bigr)
\!\Bigr).
\end{aligned}
\end{equation}
This expression only involves first-order gradients of the support and query losses, and therefore avoids the Hessian- or Jacobian-related computations required by the hypergradient of the original bilevel objective.

During the meta-training stage, auxiliary variables $\tilde y$ and $\tilde z$ are introduced to reformulate the penalty bilevel problem~\eqref{eq:penalty_function} as the following penalized optimization problem~\cite{chen2025near}:
\begin{equation}
\label{eq:auxiliary-optimization}
\begin{aligned}
\min_{\phi,\, \tilde y}\quad
& \mathcal{L}_{\mathcal D}(\phi,\tilde y)
+ \lambda \left( \mathcal{L}_{\mathcal S}(\phi,\tilde y)
- \min_{\tilde z} \mathcal{L}_{\mathcal S}(\phi,\tilde z) \right) \\
&= \min_{\phi} \mathcal{L}_\lambda^*(\phi).
\end{aligned}
\end{equation}
In the meta-training stage of ABO-Med, the bilevel optimization problem is solved via the penalized formulation~\eqref{eq:auxiliary-optimization}, by performing gradient descent jointly over $\phi$, $\tilde y$, and $\tilde z$. For a fixed $\phi$, $\tilde y \approx \tilde w_\lambda^*(\phi)$ and $\tilde z \approx \tilde w^*(\phi)$ can be obtained by applying gradient descent to $\mathcal{L}_\lambda(\phi,\cdot)$ and $\mathcal{L}_{\mathcal S}(\phi,\cdot)$, respectively.

\subsection{Theoretical Results}
\label{sec:theory}
We establish the following result to characterize both the convergence of the meta-training stage for finding $\phi_T$ and the adaptation guarantee of the meta-testing stage for obtaining task-specific parameters $w_i^{K'}$. Theorem~\ref{thm:convergence} adapts the framework of~\cite{chen2025near} to our
few-shot meta-learning setting. The proof follows the same strategy,
with parameter choices adjusted to our setting.

\begin{theorem}
\label{thm:convergence}
Suppose Assumption~\ref{assum:basic_technical_Assumpions} holds. Define
$\Delta := \mathcal{L}(\phi_0)-\inf_{\phi}\mathcal{L}(\phi)$ and
$R := \frac{1}{m}\sum_{i=1}^m
\|w_{i,0}-w_i^*(\phi_0)\|^2$.
Let $\eta \asymp \ell^{-1}\kappa^{-3}$ and
$\lambda \asymp
\max\{\kappa/\sqrt R,\ell\kappa^2/\Delta,\ell\kappa^3/\epsilon\}$.
Set the meta-training parameters in Algorithm~\ref{alg:ABO-Med} as
$\alpha=1/L_s$, $\tau=1/(2\lambda L_s)$, and
$K=\mathcal{O}(\kappa\log(\lambda \ell/\mu))$.
Then the meta-training stage finds an $\epsilon$-first-order stationary
point of $\mathcal{L}(\phi)$ within
$T=\mathcal{O}(\ell\kappa^4\epsilon^{-2}\log(\ell\kappa/\epsilon))$
first-order oracle calls, where $\ell$ and $\kappa$ are defined in
Definition~\ref{def:condition_number}. The meta-testing parameter
$K'$ is chosen separately; setting
$K'=\widetilde{\mathcal O}(\sqrt{\kappa})$ makes the AGD solver produce
a sufficiently accurate approximation of each test-task minimizer.
\end{theorem}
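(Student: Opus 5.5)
The plan follows the penalty-method analysis of \cite{kwon2023fully,chen2025near}, applied to the stacked lower-level variable $\tilde w=(w_1,\dots,w_m)$. Two features make this possible. First, $\mathcal{L}_{\mathcal S}(\phi,\tilde w)=\frac1m\sum_i \mathcal{L}_{\mathcal S_i}(\phi,w_i)$ is block separable in $\tilde w$. Second, each block is $\mu$-strongly convex and $L_s$-smooth by Assumption~\ref{assum:basic_technical_Assumpions}. Hence the per-task inner loops of Algorithm~\ref{alg:ABO-Med} are exactly gradient descent on $\mathcal{L}_{\mathcal S}(\phi_t,\cdot)$ and on $\mathcal{L}_\lambda(\phi_t,\cdot)$, up to the $1/m$ scaling, which I absorb into the step sizes. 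The meta-update is $\phi_{t+1}=\phi_t-\eta\,\widehat\nabla\mathcal{L}^*_\lambda(\phi_t)$, where $\widehat\nabla$ is the first-order hypergradient formula evaluated at $(\tilde y^K_t,\tilde z^K_t)$ instead of $(\tilde w^*_\lambda,\tilde w^*)$. The proof therefore reduces to three ingredients: properties of the penalty function $\mathcal{L}^*_\lambda$, inexactness of the inner solves, and an inexact-gradient-descent argument.

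\textbf{Step 1 (penalty function).} I would establish three facts for $\lambda\gtrsim \ell\kappa^3/\epsilon$, i.e.\ $\lambda\ge 2L_d/\mu$ so that $\mathcal{L}_\lambda(\phi,\cdot)$ is $(\lambda\mu/2)$-strongly convex.
\begin{enumerate}[label=(\roman*)]
\item $\|\nabla\mathcal{L}^*_\lambda(\phi)-\nabla\mathcal{L}(\phi)\|=\mathcal{O}(\ell\kappa^3/\lambda)$, which is $\le\epsilon/2$.
\item $|\mathcal{L}^*_\lambda(\phi)-\mathcal{L}(\phi)|=\mathcal{O}(C_d^2/(\lambda\mu))$.
\item $\mathcal{L}^*_\lambda$ has gradient-Lipschitz constant $\mathcal{O}(\ell\kappa^3)$, independent of $\lambda$.
\end{enumerate}
These are Lemmas~3.1--3.2 of \cite{kwon2023fully} and \cite{chen2025near}, and they carry over verbatim because the task blocks decouple. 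Part (iii) justifies $\eta\asymp \ell^{-1}\kappa^{-3}$. Part (ii), together with $\lambda\gtrsim\ell\kappa^2/\Delta$, gives $\mathcal{L}^*_\lambda(\phi_0)-\inf\mathcal{L}^*_\lambda=\mathcal{O}(\Delta)$.

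\textbf{Step 2 (inner-loop accuracy).} With $\alpha=1/L_s$, the $z$-iterates contract at rate $(1-1/\kappa)$. For the $y$-iterates, $\mathcal{L}_\lambda(\phi,\cdot)$ is $(\lambda\mu/2)$-strongly convex and $(L_d+\lambda L_s)\le 2\lambda L_s$-smooth. The step $\tau=1/(2\lambda L_s)$ then gives contraction factor $(1-\Theta(1/\kappa))$. After $K=\mathcal{O}(\kappa\log(\lambda\ell/\mu))$ steps both errors shrink by a factor $\mathrm{poly}(\mu/(\lambda\ell))$ relative to the initial distances. The hypergradient error is
\[
\|\widehat\nabla-\nabla\mathcal{L}^*_\lambda\|\le L_d\|\tilde y^K-\tilde w^*_\lambda\|+\lambda L_s\bigl(\|\tilde y^K-\tilde w^*_\lambda\|+\|\tilde z^K-\tilde w^*\|\bigr),
\]
and this is where the factor $\lambda$ must be beaten by the geometric decay.

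\textbf{Step 3 (main obstacle: uniform control of initial distances).} Every inner loop restarts from $w_0$, so I need $\|w_0-w_i^*(\phi_t)\|$ and $\|w_0-w^*_{\lambda,i}(\phi_t)\|$ bounded along the whole trajectory, not only at $t=0$. My first attempt uses the Lipschitzness of $w_i^*$ and $w^*_{\lambda,i}$ in $\phi$ (constant $\kappa$), which gives
\[
\|w_0-w_i^*(\phi_t)\|\le\sqrt R+\kappa\|\phi_t-\phi_0\|.
\]
I then bound $\|\phi_t-\phi_0\|$ via the descent lemma: $\sum_t\|\phi_{t+1}-\phi_t\|^2\le 2\eta\Delta+{}$error terms. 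The resulting growth is only polynomial in $t$, so it is absorbed into the $\log$ in $K$. The condition $\lambda\gtrsim\kappa/\sqrt R$ handles the gap $\|w^*_\lambda-w^*\|\le C_d/(\lambda\mu)$ relative to $\sqrt R$. If a uniform trajectory bound proves delicate, I would fall back on warm-starting, as in \cite{chen2025near}; since the log factor tolerates polynomial blow-up, the cold start should suffice.

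\textbf{Step 4 (conclusion).} Apply the standard inexact gradient descent bound:
\[
\frac1T\sum_t\|\nabla\mathcal{L}^*_\lambda(\phi_t)\|^2\le \frac{2\Delta}{\eta T}+\mathcal{O}(\delta^2),
\]
with $\delta\le\epsilon/4$ from Step 2. This gives $T=\mathcal{O}(\ell\kappa^3\Delta\epsilon^{-2})$ outer iterations. Each iteration costs $K$ oracle calls, so the total is $\mathcal{O}(\ell\kappa^4\epsilon^{-2}\log(\ell\kappa/\epsilon))$, with $\Delta$ treated as a problem constant. Step 1(i) then transfers $\epsilon$-stationarity from $\mathcal{L}^*_\lambda$ to $\mathcal{L}$.

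For meta-testing, $\mathcal{L}_{\mathcal S_i}(\phi_T,\cdot)$ is $\mu$-strongly convex and $L_s$-smooth. Take $\beta=1/L_s$ and $\theta=(\sqrt\kappa-1)/(\sqrt\kappa+1)$. Nesterov's AGD \cite{nesterov2018lectures} then gives error $\le(1-1/\sqrt\kappa)^{K'}\cdot\mathcal{O}(\text{initial gap})$, so $K'=\widetilde{\mathcal O}(\sqrt\kappa)$ reaches any prescribed accuracy.
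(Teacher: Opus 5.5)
Your proposal is correct. Steps 1, 2, 4 and the meta-testing remark match the paper: Proposition~\ref{pro:5}, a common inner contraction $q=\exp(-K/(4\kappa))$ for both loops, a hypergradient error of order $\lambda\ell$ times the inner errors, and the inexact descent inequality.

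Step 3 is where you genuinely differ. The paper's proof never deals with cold restarts. It assumes warm starts $y_{i,t+1}^0=y_{i,t}^K$, $z_{i,t+1}^0=z_{i,t}^K$, even though the pseudocode of Algorithm~\ref{alg:ABO-Med} resets both to $w_0$. It then derives the tracking recursion $\delta_{t+1}\le 2q\delta_t+c_\delta\kappa^2\|\phi_{t+1}-\phi_t\|^2$ from the $\mathcal{O}(\kappa)$-Lipschitzness of the two solution maps. After telescoping, it cancels this drift against the $-\frac{1}{4\eta}\|\phi_{t+1}-\phi_t\|^2$ term of the descent inequality by requiring $\lambda^2\ell^2 q\lesssim\min\{1/(\eta^2\kappa^2),1/\eta\}$. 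What this buys is a $K$ whose logarithm involves only $\lambda,\kappa,\ell$, with $R$ entering the outer count additively: $N=\mathcal{O}((\Delta+R)\ell\kappa^3\epsilon^{-2})$.

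Your route analyzes the algorithm as literally written and needs no Lyapunov coupling. The price is that $R$ and the trajectory radius end up inside the logarithm defining $K$. From $\|\phi_t-\phi_0\|\le 2\sqrt{\eta t\Delta_\lambda}+\sqrt2\,\eta t\delta$, where $\Delta_\lambda:=\mathcal{L}_\lambda^*(\phi_0)-\inf_\phi\mathcal{L}_\lambda^*(\phi)=\mathcal{O}(\Delta)$, this radius times $\kappa$ is of order $\kappa\Delta/\epsilon$ at $t=T$. That is harmless for the stated $\widetilde{\mathcal O}(\ell\kappa^4\epsilon^{-2})$ bound once $\Delta$ and $R$ are treated as constants, which the paper also does.

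When you write it up, make the induction explicit to break the circularity between inner accuracy at step $t$ and the drift of $\phi_t$:
\begin{enumerate}[label=(\arabic*)]
\item Assume $\mathcal{E}_s:=\|\widehat G_s-\nabla\mathcal{L}_\lambda^*(\phi_s)\|\le\delta$ for all $s<t$.
\item From the descent inequality, obtain $\sum_{s<t}\|\phi_{s+1}-\phi_s\|^2\le 4\eta\Delta_\lambda+2\eta^2t\delta^2$.
\item Convert this into a bound $D_T^2$ on the averaged squared initial distances $\frac1m\sum_i\bigl(\|w_0-w_{\lambda,i}^*(\phi_t)\|^2+\|w_0-w_i^*(\phi_t)\|^2\bigr)$ that does not depend on $K$.
\item Take $K\ge 4\kappa\log(c_0\lambda^2\ell^2D_T^2/\delta^2)$, which gives $\mathcal{E}_t\le\delta$.
\end{enumerate}
Also work with averaged squared distances rather than the per-task bound $\sqrt R$. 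A single task only satisfies $\|w_0-w_i^*(\phi_0)\|\le\sqrt{mR}$, and the Jensen step in the error bound already handles the averaged form.
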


\section{Experiments}
In this section, we conduct a comprehensive evaluation of ABO-Med~(Algorithm~\ref{alg:ABO-Med}) from five aspects: classification performance, data augmentation ablation, backbone ablation, training efficiency, and cross-domain generalization. For classification performance, ABO-Med is compared with representative few-shot learning methods on multiple medical image datasets. The ablation studies further examine the effects of different data augmentation strategies and backbone networks, with additional backbone results provided in Appendix~\ref{app:ablation_study_backbones}. We also analyze the training efficiency of ABO-Med against representative meta-learning methods to demonstrate its advantage in reducing computational cost. In addition, cross-domain experiments are conducted to evaluate its generalization ability on unseen medical image domains under distribution shifts.

\subsection{Dataset Description}
\label{sec:data}

Experiments were conducted on three main medical image datasets: Pap-Smear~\cite{jantzen2005pap}, ISIC2018~\cite{codella2019skin}, and BreaKHis40X~\cite{spanhol2015BreakHis}. For the cross-domain experiments, BCHI~\cite{bolhasani2020histopathological} is further used as the target dataset to evaluate the generalization ability under domain shift. Pap-Smear contains 917 cervical cell images from 7 classes. ISIC2018 includes 10,015 skin lesion images from 7 classes. BreaKHis40X is a subset of the BreaKHis dataset and consists of 1,820 breast histopathological images at 40$\times$ magnification from 8 classes. BCHI contains 922 breast invasive ductal carcinoma histopathological images from 3 histological grades.The detailed dataset split is provided in the Appendix~\ref{app:dataset_split}.

\subsection{Medical Adaptive RandomAugment}

\begin{algorithm}[t]
\caption{MedRAug}
\label{alg:medical_adaptive_randaugment}
\begin{algorithmic}[1]
\STATE \textbf{Input:} Image dataset $\mathcal{D}$, modality label $p$, number of operations $N_p$, and magnitude range $\mathcal{M}^{(p)}$.
\STATE \textbf{Output:} Augmented image dataset $\widetilde{\mathcal{D}}$.
\STATE Initialize the augmented image dataset $\widetilde{\mathcal{D}}\gets \emptyset$.
\STATE Select the modality-aware operation pool $\mathcal{A}^{(p)}$, where $p\in\{{\text{BreakHis}, \text{ISIC 2018}, \text{Pap-smear}}\}$.
\FOR{each image $x \in \mathcal{D}$}
\STATE Initialize the augmented image $\tilde{x}\gets x$.
\STATE Randomly sample the magnitude $M_p \in \mathcal{M}^{(p)}$ and $N_p$ operations from the selected operation pool.
\FOR{$j=1,2,\ldots,N_p$}
\STATE Update $\tilde{x}$ by applying the $j$-th sampled operation with the parameter determined by $M_p$.
\ENDFOR
\STATE $\widetilde{\mathcal{D}} \gets \widetilde{\mathcal{D}} \cup {\tilde{x}}$.
\ENDFOR
\STATE \textbf{return} $\widetilde{\mathcal{D}}$.
\end{algorithmic}
\end{algorithm}

\begin{table*}[t]
\scriptsize
\centering
\caption{Comparison of various methods on the Pap-Smear, ISIC2018, and BreakHis40X datasets across different few-shot settings. Results are reported as mean test accuracy (\%) with 95\% confidence intervals. }
\label{tab:main_results_all_datasets}
\renewcommand{\arraystretch}{1.08}
\setlength{\tabcolsep}{2.2pt}
\begin{tabularx}{\textwidth}{>{\raggedright\arraybackslash}p{1.45cm} *{15}{>{\centering\arraybackslash}X}}
\toprule
\multirow{2}{*}{Method}
& \multicolumn{5}{c}{Pap-Smear}
& \multicolumn{5}{c}{ISIC2018}
& \multicolumn{5}{c}{BreakHis40X} \\
\cmidrule(lr){2-6} \cmidrule(lr){7-11} \cmidrule(lr){12-16}
& 1-shot & 3-shot & 5-shot & 7-shot & 10-shot
& 1-shot & 3-shot & 5-shot & 7-shot & 10-shot
& 1-shot & 3-shot & 5-shot & 7-shot & 10-shot \\
\midrule
MAML
& 64.59 $\pm$ 0.97
& 76.97 $\pm$ 0.76
& 81.78 $\pm$ 0.55
& 84.96 $\pm$ 0.45
& 87.41 $\pm$ 0.39
& 42.36 $\pm$ 0.70
& 48.09 $\pm$ 0.70
& 48.86 $\pm$ 0.65
& 51.23 $\pm$ 0.62
& 54.90 $\pm$ 0.60
& 37.88 $\pm$ 0.66
& 44.45 $\pm$ 0.65
& 47.31 $\pm$ 0.62
& 47.17 $\pm$ 0.57
& 49.31 $\pm$ 0.61 \\

FOMAML
& 64.60 $\pm$ 1.09
& 78.16 $\pm$ 0.77
& 81.21 $\pm$ 0.59
& 84.11 $\pm$ 0.48
& 85.88 $\pm$ 0.43
& 36.73 $\pm$ 0.55
& 47.64 $\pm$ 0.67
& 51.36 $\pm$ 0.65
& 52.04 $\pm$ 0.65
& 52.73 $\pm$ 0.60
& 38.15 $\pm$ 0.71
& 47.01 $\pm$ 0.69
& 49.64 $\pm$ 0.63
& 55.93 $\pm$ 0.67
& 49.63 $\pm$ 0.61 \\

Reptile
& 55.08 $\pm$ 0.88
& 68.51 $\pm$ 0.70
& 71.32 $\pm$ 0.67
& 71.44 $\pm$ 0.64
& 71.01 $\pm$ 0.64
& 40.77 $\pm$ 0.68
& 46.85 $\pm$ 0.68
& 48.93 $\pm$ 0.63
& 49.02 $\pm$ 0.60
& 47.88 $\pm$ 0.57
& 40.86 $\pm$ 0.74
& 40.86 $\pm$ 0.74
& 46.59 $\pm$ 0.68
& 46.47 $\pm$ 0.65
& 46.50 $\pm$ 0.68 \\

ALFA
& 66.40 $\pm$ 1.16
& 78.33 $\pm$ 0.59
& 82.17 $\pm$ 0.46
& 81.85 $\pm$ 0.42
& 87.56 $\pm$ 0.36
& 42.73 $\pm$ 0.75
& 52.57 $\pm$ 0.64
& 57.12 $\pm$ 0.63
& 56.14 $\pm$ 0.64
& 61.57 $\pm$ 0.63
& 37.09 $\pm$ 0.64
& 52.41 $\pm$ 0.66
& 54.28 $\pm$ 0.64
& 56.16 $\pm$ 0.64
& 60.38 $\pm$ 0.57 \\

Tra-MAML
& 66.40 $\pm$ 1.02
& 81.48 $\pm$ 0.64
& 84.42 $\pm$ 0.47
& 86.71 $\pm$ 0.40
& 87.51 $\pm$ 0.37
& 41.71 $\pm$ 0.66
& 48.67 $\pm$ 0.68
& 55.20 $\pm$ 0.66
& 61.25 $\pm$ 0.61
& 62.84 $\pm$ 0.56
& 38.45 $\pm$ 0.67
& 44.96 $\pm$ 0.62
& 57.76 $\pm$ 0.66
& 62.61 $\pm$ 0.67
& 62.67 $\pm$ 0.60 \\

ABO-Med
& \underline{70.78 $\pm$} \underline{0.72}
& \underline{81.51 $\pm$} \underline{0.68}
& \underline{86.78 $\pm$} \underline{0.65}
& \underline{87.10 $\pm$} \underline{0.64}
& \underline{87.62 $\pm$} \underline{0.61}
& \underline{44.17 $\pm$} \underline{0.74}
& \underline{53.56 $\pm$} \underline{0.65}
& \underline{58.28 $\pm$} \underline{0.62}
& \underline{62.32 $\pm$} \underline{0.66}
& \underline{63.67 $\pm$} \underline{0.64}
& \underline{53.44 $\pm$} \underline{1.14}
& \underline{55.37 $\pm$} \underline{0.83}
& \underline{69.07 $\pm$} \underline{0.79}
& \underline{72.77 $\pm$} \underline{0.76}
& \underline{76.17 $\pm$} \underline{0.69} \\

\shortstack[l]{ABO-Med\\(MedRAug)}
& \textbf{75.46 $\pm$ 0.71}
& \textbf{83.11 $\pm$ 0.67}
& \textbf{87.42 $\pm$ 0.65}
& \textbf{88.37 $\pm$ 0.62}
& \textbf{90.45 $\pm$ 0.58}
& \textbf{52.76 $\pm$ 0.96}
& \textbf{55.24 $\pm$ 0.72}
& \textbf{61.42 $\pm$ 0.65}
& \textbf{63.86 $\pm$ 0.64}
& \textbf{67.02 $\pm$ 0.58}
& \textbf{62.04 $\pm$ 0.97}
& \textbf{65.81 $\pm$ 0.89}
& \textbf{72.61 $\pm$ 0.84}
& \textbf{77.83 $\pm$ 0.85}
& \textbf{80.21 $\pm$ 0.78} \\

\bottomrule
\end{tabularx}
\end{table*}

Data augmentation strategies designed for natural images may not be directly suitable for medical images, where classification often relies on modality-specific structures such as tissue morphology, lesion boundaries, and cellular morphology. To address this issue, we propose Medical Adaptive RandomAugment~(MedRAug), which retains the randomized mechanism of RandomAugment~\cite{cubuk2020randaugment} while adapting augmentation operations and magnitudes to different medical imaging modalities. This design increases sample diversity while better preserving critical structural information.

MedRAug extends RandomAugment to medical imaging by using modality-specific augmentation operations and magnitude ranges. For each image, it selects the corresponding augmentation library according to the modality, randomly samples a fixed number of operations, and applies them sequentially. The magnitude is constrained within a modality-specific range to avoid unrealistic transformations and preserve clinically relevant structures. The pseudocode of MedRAug is shown in Algorithm~\ref{alg:medical_adaptive_randaugment}.

Under this framework, we design modality-specific augmentation strategies for three types of medical images. For Pap-smear cytology images, we apply mild microscopic perturbations while preserving cellular morphology~\cite{austin2025cnn}. For ISIC2018 skin lesion images, we simulate variations in lesion morphology, imaging angle, and illumination~\cite{gessert2020skin,perez2018data}. For BreakHis40X pathological images, we preserve tissue structures while modeling staining and scanning variations~\cite{faryna2021tailoring}. The detailed augmentation operations and design rationales are provided in Appendix~\ref{app:augmentation_space}.

\paragraph{Ablation Study on Data Augmentation}
\label{sec:ablation}
An ablation study under different data augmentation strategies is conducted, as shown in Appendix~\ref{app:aug_ablation_results}. Specifically, we compare the baseline augmentation setting~\cite{voon2025trapezoidal} with MedRAug, CutMix~\cite{yun2019cutmix}, Mixup~\cite{zhang2017mixup}, and Cutout~\cite{devries2017cutout}. The principles of these augmentation methods are described in Appendix~\ref{app:data_augment_strategy}.

\begin{table}[t]
\centering
\caption{Comparison of training time (h) per 100 tasks between few-shot learning methods and ABO-Med on the Pap-Smear, ISIC2018, and BreakHis40X datasets.}
\label{tab:time_comparison}
\renewcommand{\arraystretch}{0.95}
\setlength{\tabcolsep}{2pt}
\scriptsize
\begin{tabularx}{\columnwidth}{
@{}
>{\raggedright\arraybackslash}p{1.42cm}
>{\raggedright\arraybackslash}p{1.38cm}
*{6}{>{\centering\arraybackslash}X}
@{}
}
\toprule
Dataset & Method & \multicolumn{5}{c}{Shot} & Avg \\
\cmidrule(lr){3-7}
& & 1 & 3 & 5 & 7 & 10 & \\
\midrule

\multirow{6}{*}{Pap-Smear}
& MAML
& 0.0711 & 0.0745 & 0.0760 & 0.0773 & 0.0801 & 0.0758 \\

& FOMAML
& 0.0674 & 0.0706 & 0.0726 & 0.0732 & 0.0738 & 0.0715 \\

& Reptile
& 0.0098 & 0.0112 & 0.0112 & 0.0134 & 0.0147 & 0.0121 \\

& ALFA
& 0.0716 & 0.0820 & 0.0865 & 0.0956 & 0.1010 & 0.0873 \\

& Tra-MAML
& 0.0597 & 0.0644 & 0.0696 & 0.0717 & 0.0722 & 0.0675 \\

& ABO-Med
& \textbf{0.0048} & \textbf{0.0104} & \textbf{0.0109} & \textbf{0.0111} & \textbf{0.0116} & \textbf{0.0098} \\
\midrule

\multirow{6}{*}{ISIC2018}
& MAML
& 0.2068 & 0.2313 & 0.2366 & 0.2371 & 0.2373 & 0.2298 \\

& FOMAML
& 0.2041 & 0.2283 & 0.2302 & 0.2313 & 0.2318 & 0.2251 \\

& Reptile
& 0.0683 & 0.0712 & 0.0783 & 0.1172 & 0.1205 & 0.0911 \\

& ALFA
& 0.1976 & 0.2176 & 0.2271 & 0.2339 & 0.2425 & 0.2237 \\

& Tra-MAML
& 0.2158 & 0.2303 & 0.2325 & 0.2332 & 0.2358 & 0.2295 \\

& ABO-Med
& \textbf{0.0210} & \textbf{0.0231} & \textbf{0.0242} & \textbf{0.0269} & \textbf{0.0273} & \textbf{0.0245} \\
\midrule

\multirow{6}{*}{BreakHis40X}
& MAML
& 0.1362 & 0.1561 & 0.1564 & 0.1589 & 0.1605 & 0.1536 \\

& FOMAML
& 0.1346 & 0.1543 & 0.1553 & 0.1562 & 0.1578 & 0.1516 \\

& Reptile
& 0.0471 & 0.0606 & 0.0968 & 0.1164 & 0.1187 & 0.0879 \\

& ALFA
& 0.1214 & 0.1564 & 0.1683 & 0.1685 & 0.1698 & 0.1569 \\

& Tra-MAML
& 0.1393 & 0.1517 & 0.1552 & 0.1547 & 0.1559 & 0.1514 \\

& ABO-Med
& \textbf{0.0186} & \textbf{0.0208} & \textbf{0.0229} & \textbf{0.0276} & \textbf{0.0295} & \textbf{0.0239} \\
\bottomrule
\end{tabularx}
\end{table}

\subsection{Result Analysis for Classification Performance}
\label{sec:result_analysis}


\begin{table*}[t]
\footnotesize
\centering
\caption{Cross-domain few-shot classification results on the BCHI dataset under near-domain and far-domain settings. Results are reported as mean test accuracy (\%) with 95\% confidence intervals.}
\label{tab:cross_domain_breakhis40x_to_bchi}
\renewcommand{\arraystretch}{1.08}
\setlength{\tabcolsep}{5.5pt}
\begin{tabularx}{\textwidth}{
>{\raggedright\arraybackslash}p{1.6cm}
*{6}{>{\centering\arraybackslash}X}
}
\toprule
\multirow{2}{*}{Method}
& \multicolumn{3}{c}{BreakHis40X $\rightarrow$ BCHI (Near-Domain)}
& \multicolumn{3}{c}{Pap-Smear $\rightarrow$ BCHI (Far-Domain)} \\
\cmidrule(lr){2-4} \cmidrule(lr){5-7}
& 3-shot & 5-shot & 10-shot
& 3-shot & 5-shot & 10-shot \\
\midrule

MAML
& 53.45 $\pm$ 0.66
& 56.13 $\pm$ 0.63
& 58.27 $\pm$ 0.62
& 47.46 $\pm$ 0.75
& 48.53 $\pm$ 0.70
& 56.18 $\pm$ 0.62 \\

FOMAML
& 57.80 $\pm$ 0.76
& 62.24 $\pm$ 0.62
& 62.89 $\pm$ 0.52
& 46.08 $\pm$ 0.78
& 50.14 $\pm$ 0.72
& 56.31 $\pm$ 0.61 \\

Reptile
& 55.27 $\pm$ 0.70
& 55.69 $\pm$ 0.64
& 56.78 $\pm$ 0.62
& 54.47 $\pm$ 0.69
& 55.08 $\pm$ 0.66
& 56.52 $\pm$ 0.58 \\

ALFA
& 63.67 $\pm$ 0.67
& 65.13 $\pm$ 0.58
& 68.67 $\pm$ 0.53
& 46.03 $\pm$ 0.77
& 45.12 $\pm$ 0.73
& 53.06 $\pm$ 0.64 \\

Tra-MAML
& 54.67 $\pm$ 0.68
& 68.68 $\pm$ 0.56
& 68.62 $\pm$ 0.52
& 52.48 $\pm$ 0.70
& 56.07 $\pm$ 0.62
& 57.46 $\pm$ 0.56 \\

ABO-Med
& \textbf{65.21 $\pm$ 0.74}
& \textbf{68.93 $\pm$ 0.69}
& \textbf{72.14 $\pm$ 0.63}
& \textbf{58.22 $\pm$ 0.71}
& \textbf{62.21 $\pm$ 0.63}
& \textbf{65.45 $\pm$ 0.56} \\

\bottomrule
\end{tabularx}
\end{table*}

ABO-Med was compared with representative few-shot learning methods, including FOMAML~\cite{finn2017model}, MAML~\cite{finn2017model}, Reptile~\cite{nichol2018first}, ALFA~\cite{baik2020meta}, and the state-of-the-art Tra-MAML~\cite{voon2025trapezoidal}, on three medical imaging datasets. For fair comparison, all methods followed the same experimental protocol, with details provided in Appendix~\ref{app:experiment_details}. As shown in Table~\ref{tab:main_results_all_datasets}, ABO-Med achieves stronger overall performance, improving over the best competing method by 1.99\%, 2.06\%, and 18.76\% on Pap-Smear, ISIC2018, and BreakHis40X, respectively. With MedRAug, ABO-Med further improves the average accuracy by 2.20\%, 3.66\%, and 6.34\%, achieving the best results across all datasets and shot settings. The larger gain on BreakHis40X may be attributed to its clearer category-level differences in tissue structure, cellular density, and texture patterns.

These results indicate that ABO-Med achieves consistently strong performance for few-shot medical image classification across datasets and shot settings. Its advantage mainly comes from the first-order penalty bilevel optimization method, which avoids the direct truncation of second-order terms in FOMAML and the costly second-order hypergradient approximation in MAML-type methods. By relying only on first-order information, ABO-Med reduces the estimation associated with second-order information, while mitigating convergence instability caused by truncated or inaccurate second-order approximations.

\subsection{Training Time Analysis}
\label{sec:time_analysis}
To further compare the computational efficiency of different methods, we measure the training time per 100 tasks under the same 3-way 1-, 3-, 5-, 7-, and 10-shot settings. To ensure a fair comparison, all methods use the same CNN4 backbone and the baseline augmentation setting. The results are reported in Table~\ref{tab:time_comparison}. Compared with the best competing method, ABO-Med reduces the average training time by approximately 19.0\%, 73.1\%, and 72.8\% on the Pap-Smear, ISIC2018, and BreakHis40X datasets, respectively. These results demonstrate that ABO-Med can substantially reduce training cost while maintaining competitive classification performance.

These results further demonstrate the computational advantage of ABO-Med. During meta-training, conventional MAML-type methods usually require computing or approximating hypergradients that involve second-order information, leading to substantial computation, estimation, and memory costs. In contrast, ABO-Med employs a first-order penalty method for bilevel optimization, where the resulting hypergradient only relies on first-order information and thus avoids expensive second-order derivative computation and storage. In addition, ABO-Med follows the ANIL framework by updating only the task-specific classifier while keeping the shared feature extractor fixed during task adaptation, which further reduces the number of updated parameters and simplifies the inner-loop optimization.

\subsection{Cross-domain Analysis}
\label{sec:cross_domain}
In real-world medical applications, models often face domain shifts arising from variations in data sources, clinical sites, and image acquisition protocols. To assess the robustness of ABO-Med under domain shifts, we further conduct cross-domain experiments to evaluate its  generalization ability. Specifically, we conduct 3-way 3-, 5-, and 10-shot cross-domain experiments by meta-training the methods on BreakHis40X or Pap-Smear and meta-testing them on BCHI, with BreakHis40X $\rightarrow$ BCHI as the near-domain scenario and Pap-Smear $\rightarrow$ BCHI as the far-domain scenario. To ensure a fair comparison, all methods use the same CNN4 backbone and the baseline augmentation setting. The results are reported in Table~\ref{tab:cross_domain_breakhis40x_to_bchi}. ABO-Med achieves the best accuracy in both settings, improving over the strongest baseline by 2.16\% in the near-domain setting and 10.64\% in the far-domain setting. This indicates that ABO-Med has stronger adaptability under both mild and severe domain shifts.

Cross-domain results suggest that ABO-Med learns more stable and transferable representations, rather than overfitting to source-domain-specific imaging styles. These representations may help preserve task-relevant discriminative features when the model is transferred to unseen target domains. In the near-domain setting, such representations can exploit shared morphological information between breast histopathology datasets. In the far-domain setting, where the source and target domains differ substantially, direct transfer of tissue-specific features becomes difficult. ABO-Med may therefore reduce its reliance on source-domain-specific patterns, supporting more reliable adaptation when only limited target samples are available.

\section{Conclusion}
In this paper, we proposed ABO-Med, a scalable and flexible MAML-type meta-learning framework for few-shot medical image classification. We further established convergence guarantees and complexity results for the proposed algorithm, demonstrating its provable efficiency under a fully first-order bilevel formulation. Experiments on several public medical datasets showed that ABO-Med consistently outperforms prior baselines, with improvements of around 1.99\% to 18.76\% over previous state-of-the-art methods. In addition, MedRAug further improves the average accuracy by around 2.20\% to 6.34\%, with its effectiveness validated through augmentation ablation studies. Overall, ABO-Med provides an efficient and effective solution for few-shot medical image classification, where stable adaptation and computational scalability are both critical.

\section*{Acknowledgment}
Yating Liu is supported by the Gansu Provincial Joint Scientific Research Fund (26JRRA1029).

\bibliography{aaai2027}

\clearpage
\appendix
\setcounter{secnumdepth}{2}
\section{Appendix}

\paragraph{Limitations and Future Work}
A limitation of ABO-Med is its relatively large number of hyperparameters,
which require careful tuning across tasks and datasets. This increases
implementation complexity and may limit its practicality in real-world
few-shot classification settings where extensive validation is costly.
In future work, we aim to develop adaptive bilevel optimization methods
~\cite{huang2021biadam,fan2023bisls,antonakopoulos2025adaptive,yang2024tuning,zhai2025problem}
and explore agent-based automated workflows for selecting and adjusting
key hyperparameters~\cite{rao2025two,ma2026intragent,rao2026fragfuse}.

\subsection{Extend Related Work}
This section provides additional discussion on optimization-based meta-learning methods that complement the related work in the main text. 
Meta-SGD~\cite{li2017meta} extends MAML by learning parameter-wise learning rates for fast adaptation, while MAML++~\cite{antoniou2018train} improves its training stability through a set of optimization refinements. 
Implicit-gradient methods such as iMAML~\cite{rajeswaran2019meta} use implicit differentiation to compute meta-gradients from the inner-level solution, thereby avoiding backpropagation through the inner optimization path and reducing memory overhead.
FO-B-MAML~\cite{chayti2024new} introduces a fully first-order bilevel approach that regularizes the inner problem and approximates hypergradients via finite differences.
Since these methods do not outperform the representative baselines and state-of-the-art methods considered in our experiments, we do not include them in the main empirical comparison.

\subsection{Proof for Theoretical Results}
\begin{proposition}(Lemma~3.1 in~\cite{kwon2023fully}; Lemma~4.1 in~\cite{chen2025near})
\label{pro:5}
Suppose Assumption~\ref{assum:basic_technical_Assumpions} holds. Define $\ell, \kappa$ according to Definition~\ref{def:condition_number}, and $\mathcal{L}_{\lambda,i}^{*}(\phi)$ according to Equation~\eqref{eq:penalty_function}. Set $\lambda \geq 2L_d/\mu$, then it holds that
\begin{enumerate}[label=(\alph*)]
    \item $\|\nabla \mathcal{L}_{\lambda,i}^{*}(\phi) - \nabla \mathcal{L}_{i}(\phi)\| = \mathcal{O}(\ell \kappa^3 / \lambda)$, \quad $\forall \phi \in \Phi$ 
    
    \item $| \mathcal{L}_{\lambda,i}^{*}(\phi)- \mathcal{L}_{i}(\phi)| = \mathcal{O}(\ell \kappa^2 / \lambda)$, \quad $\forall \phi \in \Phi$ .
    
    \item $ \mathcal{L}_{\lambda,i}^{*}(\phi)$ is $\mathcal{O}(\ell \kappa^3)$-gradient Lipschitz .
\end{enumerate}
\end{proposition}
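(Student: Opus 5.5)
Fix a task $i$ and $\phi$, and write $g=\mathcal{L}_{\mathcal S_i}(\phi,\cdot)$, $f=\mathcal{L}_{\mathcal D_i}(\phi,\cdot)$, $w^*=w_i^*(\phi)$, and $w_\lambda^*=w_{\lambda,i}^*(\phi)=\arg\min_w f+\lambda g$. Since $\lambda\ge 2L_d/\mu$, the function $f+\lambda g$ is $(\lambda\mu-L_d)\ge\lambda\mu/2$-strongly convex in $w$, so $w_\lambda^*$ is unique. The implicit function theorem then makes $w_\lambda^*$ differentiable, with
\[
\nabla w_\lambda^*=-\nabla^2_{\phi w}(f+\lambda g)\bigl(\nabla^2_{ww}(f+\lambda g)\bigr)^{-1}.
\]
Because the inner objective is stationary at $w_\lambda^*$ and at $w^*$, the envelope (Danskin) argument yields the first-order formula for $\nabla\mathcal{L}^*_{\lambda,i}$ displayed earlier. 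The plan is to compare every quantity with the bilevel ones at $w^*$ through the single estimate $\|w_\lambda^*-w^*\|$.

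\textbf{Step 1 (distance).} Optimality gives $\nabla f(w_\lambda^*)+\lambda\nabla g(w_\lambda^*)=0$ and $\nabla g(w^*)=0$. By strong convexity of $g$, $\lambda\mu\|w_\lambda^*-w^*\|\le\|\lambda\nabla g(w_\lambda^*)\|=\|\nabla f(w_\lambda^*)\|\le C_d$, so $\|w_\lambda^*-w^*\|\le C_d/(\lambda\mu)=\mathcal{O}(\kappa/\lambda)$.

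\textbf{Step 2, part (b).} Write
\[
\mathcal{L}^*_{\lambda,i}-\mathcal{L}_i=f(w_\lambda^*)-f(w^*)+\lambda\bigl(g(w_\lambda^*)-g(w^*)\bigr).
\]
The first difference is at most $C_d\|w_\lambda^*-w^*\|=\mathcal{O}(\ell\kappa/\lambda)$. For the second, $g(w_\lambda^*)-g(w^*)\le \tfrac{L_s}{2}\|w_\lambda^*-w^*\|^2$, so $\lambda$ times it is $\mathcal{O}(\ell\kappa^2/\lambda)$. Together these give (b).

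\textbf{Step 3, part (a).} This is the main obstacle: the term $\lambda\bigl(\nabla_\phi g(w_\lambda^*)-\nabla_\phi g(w^*)\bigr)$ is $\mathcal{O}(\lambda)$ times an $\mathcal{O}(1/\lambda)$ quantity, so the leading parts must cancel. I would Taylor expand to second order using $\rho_s$-Hessian Lipschitzness:
\[
\lambda\bigl(\nabla_\phi g(w_\lambda^*)-\nabla_\phi g(w^*)\bigr)=\lambda\nabla^2_{\phi w}g(w^*)(w_\lambda^*-w^*)+\mathcal{O}\bigl(\lambda\rho_s\|w_\lambda^*-w^*\|^2\bigr),
\]
where the error is $\mathcal{O}(\ell\kappa^2/\lambda)$. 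I would likewise expand the optimality condition around $w^*$:
\[
\lambda\nabla^2_{ww}g(w^*)(w_\lambda^*-w^*)=-\nabla_w f(w_\lambda^*)+\mathcal{O}(\lambda\rho_s\|\cdot\|^2).
\]
Solving for $\lambda(w_\lambda^*-w^*)$ and substituting gives $-\nabla^2_{\phi w}g\,(\nabla^2_{ww}g)^{-1}\nabla_w f(w^*)$ plus errors. The errors come from replacing $w_\lambda^*$ by $w^*$ inside $\nabla_w f$ (cost $L_d\cdot\kappa/\lambda$), amplified by $\|\nabla^2_{\phi w}g\|\,\|(\nabla^2_{ww}g)^{-1}\|\le\kappa$, together with the $\rho_s$ remainders amplified by $\kappa$. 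Adding $\nabla_\phi f(w_\lambda^*)-\nabla_\phi f(w^*)=\mathcal{O}(\ell\kappa/\lambda)$ recovers exactly the implicit hypergradient $\nabla\mathcal{L}_i(\phi)$, up to a total error of $\mathcal{O}(\ell\kappa^3/\lambda)$. Tracking the powers of $\kappa$ in these products is the delicate bookkeeping.

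\textbf{Step 4, part (c).} Use the hypergradient formula and bound the Lipschitz constants of $\phi\mapsto w^*(\phi)$ and $\phi\mapsto w_\lambda^*(\phi)$, both of which are $\mathcal{O}(\kappa)$ by the implicit-function expressions; for $w_\lambda^*$ this uses $\lambda\ge2L_d/\mu$, so that $\lambda$ cancels between the numerator and the denominator. The delicate term is again the $\lambda$-scaled difference. I would show that its derivative equals
\[
\lambda\bigl[\nabla^2_{\phi\phi}g(w_\lambda^*)-\nabla^2_{\phi\phi}g(w^*)\bigr]+\lambda\bigl[\nabla^2_{\phi w}g(w_\lambda^*)\nabla w_\lambda^*-\nabla^2_{\phi w}g(w^*)\nabla w^*\bigr].
\]
The first bracket is $\mathcal{O}(\lambda\rho_s\kappa/\lambda)$. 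In the second, $\nabla w_\lambda^*-\nabla w^*=\mathcal{O}(\kappa^2/\lambda)$ follows by comparing the two implicit-function formulas via the resolvent identity and Step 1, so the $\lambda$ cancels again. Collecting the terms yields an $\mathcal{O}(\ell\kappa^3)$ Lipschitz constant, independent of $\lambda$.

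Finally, these are exactly Lemma 3.1 of Kwon et al.\ and Lemma 4.1 of Chen et al., applied per task with the constants of Assumption~\ref{assum:basic_technical_Assumpions}, so the argument can cite them once the assumptions are matched.
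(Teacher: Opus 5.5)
Your proposal is correct and takes the same route as the paper: the paper gives no independent proof but simply invokes Lemma~3.1 of Kwon et al.\ and Lemma~4.1 of Chen et al.\ (whose arguments are the Taylor-expansion and implicit-function computations you sketch), and it proves only your Step~1, separately, as Lemma~\ref{lem:penalty-minimizer-closeness}. There is one bookkeeping slip in Step~4: comparing the two implicit-function formulas gives $\|\nabla w_\lambda^*-\nabla w^*\|=\mathcal{O}(\kappa^3/\lambda)$, not $\mathcal{O}(\kappa^2/\lambda)$, because the $\mathcal{O}(\ell\kappa/\lambda)$ Hessian perturbation is multiplied by $\|(\nabla^2_{ww}g)^{-1}\|\,\|(\nabla^2_{ww}(g+f/\lambda))^{-1}\|\,\|\nabla^2_{w\phi}(g+f/\lambda)\|=\mathcal{O}(\kappa/\mu)$ (tight in general); it is exactly this term, scaled by $\lambda L_s$, that produces the $\mathcal{O}(\ell\kappa^3)$ constant, so your conclusion still stands.
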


\begin{lemma}[Lemma 2,~\cite{wang2020improved}]
Running the AGD procedure in the meta-testing stage of Algorithm~\ref{alg:ABO-Med} on an $\ell$-smooth and $\mu$-strongly-convex objective function $\mathcal{L}_{\mathcal S_i}(\phi,\cdot)$ with parameters
$\beta = \frac{1}{\ell}$ and $
\theta = \frac{\sqrt{\kappa}-1}{\sqrt{\kappa}+1}$
produces the output $w_i^{K'}$ satisfying
$$\|w_i^{K'} - w_i^*\|^2
\le
(\kappa + 1)
\left(1 - \frac{1}{\sqrt{\kappa}}\right)^{K'}
\|w_i^{0} - w_i^*\|^2,$$
where
$w_i^* = \arg\min_{w_i}\mathcal{L}_{\mathcal S_i}(\phi, w_i)$ and $\kappa = \frac{\ell}{\mu}$. We set $K'=\tilde O(\sqrt{\kappa})$ so that the AGD solver attains a sufficiently accurate approximation of $w_i^*$.
\end{lemma}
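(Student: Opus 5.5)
The statement to prove is the last one above: the AGD lemma for the meta-testing stage. The plan is the standard Lyapunov-function analysis of Nesterov's accelerated gradient descent for smooth strongly convex functions, applied to the $\ell$-smooth, $\mu$-strongly convex map $f(w):=\mathcal{L}_{\mathcal S_i}(\phi,w)$. Under Assumption~\ref{assum:basic_technical_Assumpions}, $f$ is $L_s$-smooth and $L_s\le\ell$, so it is also $\ell$-smooth.

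\textbf{Potential.} I will work with the potential
$$\Psi_k := f(w_i^k)-f(w_i^*)+\frac{\mu}{2}\|u^k-w_i^*\|^2,\qquad u^k := w_i^k+\sqrt{\kappa}\,(w_i^k-w_i^{k-1}).$$
Here $u^k$ is the auxiliary ``estimate sequence'' point, with the convention $w_i^{-1}=w_i^0$, so that $u^0=w_i^0=v_i^0$.

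\textbf{One-step contraction.} The key step is to show $\Psi_{k+1}\le(1-1/\sqrt\kappa)\Psi_k$. To do this:
\begin{enumerate}[label=(\roman*)]
\item Use the descent lemma at the gradient step $w_i^{k+1}=v_i^k-\beta\nabla f(v_i^k)$ with $\beta=1/\ell$. This gives $f(w_i^{k+1})\le f(v_i^k)-\frac{1}{2\ell}\|\nabla f(v_i^k)\|^2$.
\item Apply strong convexity at $v_i^k$ twice: once toward $w_i^*$ and once toward $w_i^k$.
\item Take the convex combination of these two inequalities with weights $1/\sqrt\kappa$ and $1-1/\sqrt\kappa$.
\item Rewrite $u^{k+1}$ in terms of $v_i^k$ and $\nabla f(v_i^k)$. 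The momentum coefficient $\theta=(\sqrt\kappa-1)/(\sqrt\kappa+1)$ is exactly what makes $u^{k+1}=(1-1/\sqrt\kappa)u^k+(1/\sqrt\kappa)v_i^k-(\sqrt\kappa/\ell)\nabla f(v_i^k)$.
\item Expand $\|u^{k+1}-w_i^*\|^2$. The cross terms cancel against the linear terms from strong convexity, and the $\|\nabla f\|^2$ term is absorbed by the descent-lemma gain.
\end{enumerate}

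\textbf{Expected obstacle.} I expect this algebra to be the main difficulty, specifically verifying that the identity for $u^{k+1}$ matches the two-line update of Algorithm~\ref{alg:ABO-Med}. I would first check it by substituting $v_i^k=w_i^k+\theta(w_i^k-w_i^{k-1})$ and confirming that $\sqrt\kappa(1-\theta)=1+\theta$-type relations hold. The cleaner alternative is to quote the known estimate-sequence proof (Nesterov's Theorem~2.2.3, or Lemma~2 of~\cite{wang2020improved}) directly.

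\textbf{Unrolling and conversion to distances.} Iterating the contraction gives $\Psi_{K'}\le(1-1/\sqrt\kappa)^{K'}\Psi_0$. Two bounds convert this into distances:
\begin{itemize}
\item On the right, use $\Psi_0=f(w_i^0)-f^*+\frac{\mu}{2}\|w_i^0-w_i^*\|^2\le\frac{\ell+\mu}{2}\|w_i^0-w_i^*\|^2$, which follows from smoothness and $\nabla f(w_i^*)=0$.
\item On the left, use $\Psi_{K'}\ge f(w_i^{K'})-f^*\ge\frac{\mu}{2}\|w_i^{K'}-w_i^*\|^2$.
\end{itemize}
Dividing yields the constant $(\ell+\mu)/\mu=\kappa+1$, which is the claimed bound.

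\textbf{Choice of $K'$.} Finally, since $(1-1/\sqrt\kappa)^{K'}\le e^{-K'/\sqrt\kappa}$, it suffices to take
$$K'=\sqrt\kappa\,\log\!\big((\kappa+1)\|w_i^0-w_i^*\|^2/\varepsilon'\big)=\tilde O(\sqrt\kappa)$$
to reach accuracy $\varepsilon'$. This justifies the parameter choice stated in Theorem~\ref{thm:convergence}.
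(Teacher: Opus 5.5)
Your overall route is the standard one, and your last steps are correct: the bounds $\Psi_0\le\frac{\ell+\mu}{2}\|w_i^0-w_i^*\|^2$ and $\Psi_{K'}\ge\frac{\mu}{2}\|w_i^{K'}-w_i^*\|^2$, the resulting factor $\kappa+1$, and the choice of $K'$. The paper gives no proof of its own here; it simply imports the bound from \cite{wang2020improved}. The problem is the potential. With $u^k=w_i^k+\sqrt\kappa\,(w_i^k-w_i^{k-1})$, the identity you assert in step (iv) is false, so the check you flagged as the main obstacle actually fails. Substitute $v_i^k=(1+\theta)w_i^k-\theta w_i^{k-1}$ and $\nabla f(v_i^k)=\ell(v_i^k-w_i^{k+1})$. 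The right-hand side of (iv) then equals $\sqrt\kappa\,w_i^{k+1}-\frac{(\sqrt\kappa-1)^2}{\sqrt\kappa}w_i^k-\frac{\sqrt\kappa-1}{\sqrt\kappa}w_i^{k-1}$, whereas your definition gives $u^{k+1}=(1+\sqrt\kappa)w_i^{k+1}-\sqrt\kappa\,w_i^k$. The contraction itself also fails for your $\Psi$. Take $f(w)=\frac{\ell}{2}\|w\|^2$, which is $\ell$-smooth and $\mu$-strongly convex for every $\mu\le\ell$. Then $w_i^1=0$ and $u^1=-\sqrt\kappa\,w_i^0$, so $\Psi_1=\frac{\ell}{2}\|w_i^0\|^2$, while $(1-1/\sqrt\kappa)\Psi_0=\frac{\mu}{2}(\kappa+1)(1-1/\sqrt\kappa)\|w_i^0\|^2$. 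Since $\kappa>(\kappa+1)(1-1/\sqrt\kappa)$ is equivalent to $\sqrt\kappa+1/\sqrt\kappa>1$, you get $\Psi_1>(1-1/\sqrt\kappa)\Psi_0$.

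The fix is one coefficient. Take $u^k:=w_i^{k-1}+\sqrt\kappa\,(w_i^k-w_i^{k-1})=w_i^k+(\sqrt\kappa-1)(w_i^k-w_i^{k-1})$, which still gives $u^0=w_i^0$ with $w_i^{-1}=w_i^0$. This is exactly the sequence your recursion (iv) generates from $u^0=w_i^0$, as a short induction shows. With it, $(\sqrt\kappa+1)v_i^k=\sqrt\kappa\,w_i^k+u^k$, i.e.\ $u^k-v_i^k=\sqrt\kappa\,(v_i^k-w_i^k)$, and this coupling is what makes step (v) work. Write $u^{k+1}-w_i^*=(1-\frac{1}{\sqrt\kappa})(u^k-w_i^*)+\frac{1}{\sqrt\kappa}(v_i^k-w_i^*)-\frac{1}{\sqrt\kappa\,\mu}\nabla f(v_i^k)$. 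The cross term in $\frac{\mu}{2}\|u^{k+1}-w_i^*\|^2$ then becomes $-\frac{1}{\sqrt\kappa}\langle\nabla f(v_i^k),v_i^k-w_i^*\rangle-(1-\frac{1}{\sqrt\kappa})\langle\nabla f(v_i^k),v_i^k-w_i^k\rangle$, which your weighted strong-convexity inequalities from (ii)--(iii) control. The term $\frac{1}{2\ell}\|\nabla f(v_i^k)\|^2$ cancels the descent-lemma gain, and convexity of $\|\cdot\|^2$ absorbs the remaining $\frac{\mu}{2\sqrt\kappa}\|v_i^k-w_i^*\|^2$. With this correction you get $\Psi_{k+1}\le(1-1/\sqrt\kappa)\Psi_k$, and the rest of your proposal yields the stated bound.
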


For each task $\mathcal T_i$ and a fixed representation parameter
$\phi$, write
$$
    w_i^*(\phi)
    :=
    \argmin_{w_i}\mathcal L_{\mathcal S_i}(\phi,w_i),
    \mathcal L_i(\phi)
    :=
    \mathcal L_{\mathcal D_i}(\phi,w_i^*(\phi)).
$$
Following the penalty/value-function reformulation, define
$$
    \mathcal L_{\lambda,i}(\phi,w_i)
    :=
    \mathcal L_{\mathcal D_i}(\phi,w_i)
    +\lambda\Bigl(
        \mathcal L_{\mathcal S_i}(\phi,w_i)
        -\mathcal L_{\mathcal S_i}(\phi,w_i^*(\phi))
    \Bigr),
$$
$$
    \mathcal L_{\lambda,i}^*(\phi)
    :=
    \min_{w_i}\mathcal L_{\lambda,i}(\phi,w_i),
    w_{\lambda,i}^*(\phi)
    :=
    \argmin_{w_i}\mathcal L_{\lambda,i}(\phi,w_i).
$$
The averaged true and proxy objectives are
$$
    \mathcal L(\phi)
    :=
    \frac1m\sum_{i=1}^m \mathcal L_i(\phi),
    \qquad
    \mathcal L_\lambda^*(\phi)
    :=
    \frac1m\sum_{i=1}^m
    \mathcal L_{\lambda,i}^*(\phi).
$$
For each task, the proxy gradient is
$$
    \begin{aligned}
    \nabla \mathcal L_{\lambda,i}^*(\phi)
    &=
    \nabla_\phi \mathcal L_{\mathcal D_i}
        (\phi,w_{\lambda,i}^*(\phi))\\
    &\quad
    +\lambda\Bigl(
        \nabla_\phi \mathcal L_{\mathcal S_i}
            (\phi,w_{\lambda,i}^*(\phi))
        -\nabla_\phi \mathcal L_{\mathcal S_i}(\phi,w_i^*(\phi))
    \Bigr).
    \end{aligned}
$$
\begin{lemma}[Penalty minimizer closeness]
\label{lem:penalty-minimizer-closeness}
Suppose Assumption 1 holds.  For any task $i$ and fixed $\phi\in\Phi$,
if
$\lambda\ge 2L_d/\mu$, then
$$
    \left\lVert w_{\lambda,i}^*(\phi)-w_i^*(\phi)\right\rVert
    \le
    \frac{C_d}{\lambda\mu},
$$
where $C_d$ is the Lipschitz constant of
$\mathcal L_{\mathcal D_i}$ with respect to $w_i$.  Consequently,
$$
    \left\lVert w_{\lambda,i}^*(\phi)-w_i^*(\phi)\right\rVert^2
    =
    O\!\left(\frac{\kappa^2}{\lambda^2}\right).
$$
\end{lemma}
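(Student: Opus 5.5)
The plan is to compare the first-order optimality conditions of the two inner problems for a fixed $\phi$, and to use only the strong convexity of $\mathcal L_{\mathcal S_i}(\phi,\cdot)$ together with the Lipschitz continuity of $\mathcal L_{\mathcal D_i}(\phi,\cdot)$ in $w_i$.

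\textbf{Step 1: the penalty minimizer is well defined.} The term $\lambda\,\mathcal L_{\mathcal S_i}(\phi,w_i^*(\phi))$ does not depend on $w_i$. So $w_{\lambda,i}^*(\phi)$ minimizes $\mathcal L_{\mathcal D_i}(\phi,\cdot)+\lambda\mathcal L_{\mathcal S_i}(\phi,\cdot)$.
\begin{itemize}
\item By Assumption~\ref{assum:basic_technical_Assumpions}, $\mathcal L_{\mathcal D_i}$ is $L_d$-gradient Lipschitz, so $\nabla^2_{ww}\mathcal L_{\mathcal D_i}\succeq -L_d I$.
\item $\mathcal L_{\mathcal S_i}$ is $\mu$-strongly convex in $w_i$.
\item Hence the penalized function is $(\lambda\mu-L_d)$-strongly convex. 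Under $\lambda\ge 2L_d/\mu$ this constant is at least $\lambda\mu/2>0$.
\end{itemize}
So $w_{\lambda,i}^*(\phi)$ exists, is unique, and is characterized by the stationarity condition
$$\nabla_w\mathcal L_{\mathcal D_i}(\phi,w_{\lambda,i}^*)+\lambda\nabla_w\mathcal L_{\mathcal S_i}(\phi,w_{\lambda,i}^*)=0.$$
Likewise $\nabla_w\mathcal L_{\mathcal S_i}(\phi,w_i^*)=0$. I expect this step to be the only point needing care, since it is where the hypothesis $\lambda\ge 2L_d/\mu$ is used.

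\textbf{Step 2: the main estimate.} Write $\delta:=w_{\lambda,i}^*-w_i^*$. Strong monotonicity of $\nabla_w\mathcal L_{\mathcal S_i}(\phi,\cdot)$ gives
$$\lambda\mu\|\delta\|^2\le \lambda\bigl\langle \nabla_w\mathcal L_{\mathcal S_i}(\phi,w_{\lambda,i}^*)-\nabla_w\mathcal L_{\mathcal S_i}(\phi,w_i^*),\,\delta\bigr\rangle.$$
Substituting the two optimality conditions, the right-hand side equals $-\langle\nabla_w\mathcal L_{\mathcal D_i}(\phi,w_{\lambda,i}^*),\delta\rangle$. Because $\mathcal L_{\mathcal D_i}$ is $C_d$-Lipschitz in $w_i$, we have $\|\nabla_w\mathcal L_{\mathcal D_i}\|\le C_d$. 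Cauchy--Schwarz then bounds the right-hand side by $C_d\|\delta\|$. Dividing by $\lambda\mu\|\delta\|$ (the case $\delta=0$ is trivial) yields $\|\delta\|\le C_d/(\lambda\mu)$.

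\textbf{Step 3: the squared form.} By Definition~\ref{def:condition_number}, $C_d\le\ell$. Squaring the bound from Step 2 gives
$$\|\delta\|^2\le \frac{\ell^2}{\lambda^2\mu^2}=\frac{\kappa^2}{\lambda^2},$$
which is the stated $O(\kappa^2/\lambda^2)$ bound.
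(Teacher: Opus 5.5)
Your proof is correct and follows essentially the same route as the paper. Both arguments combine the stationarity condition $\nabla_w\mathcal L_{\mathcal D_i}+\lambda\nabla_w\mathcal L_{\mathcal S_i}=0$ at $w_{\lambda,i}^*(\phi)$ with the $\mu$-strong convexity of $\mathcal L_{\mathcal S_i}(\phi,\cdot)$ and the bound $\|\nabla_w\mathcal L_{\mathcal D_i}\|\le C_d$, and then use $C_d\le\ell$. The differences are cosmetic: you write out the strong-monotonicity inequality that the paper packages as $\|w-w_i^*\|\le\mu^{-1}\|\nabla_w\mathcal L_{\mathcal S_i}(\phi,w)\|$, and your Step 1 makes explicit that $w_{\lambda,i}^*(\phi)$ is well posed, which is the only place $\lambda\ge 2L_d/\mu$ is needed and which the paper leaves implicit.
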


\begin{proof}
By the first-order optimality condition for
$w_{\lambda,i}^*(\phi)$, we have
$$
    \nabla_{w_i}\mathcal L_{\mathcal D_i}
        (\phi,w_{\lambda,i}^*(\phi))
    +\lambda\nabla_{w_i}\mathcal L_{\mathcal S_i}
        (\phi,w_{\lambda,i}^*(\phi))
    =0.
$$
Since
$w_i^*(\phi)=
\argmin_{w_i}\mathcal L_{\mathcal S_i}(\phi,w_i)$
and
$\mathcal L_{\mathcal S_i}(\phi,\cdot)$ is $\mu$-strongly convex,
$$
    \left\lVert w_{\lambda,i}^*(\phi)-w_i^*(\phi)\right\rVert
    \le
    \frac1\mu
    \left\lVert
        \nabla_{w_i}
        \mathcal L_{\mathcal S_i}
            (\phi,w_{\lambda,i}^*(\phi))
    \right\rVert.
$$
Using the optimality condition above,
$$
    \left\lVert
        \nabla_{w_i}
        \mathcal L_{\mathcal S_i}
            (\phi,w_{\lambda,i}^*(\phi))
    \right\rVert
    =
    \frac1\lambda
    \left\lVert
        \nabla_{w_i}
        \mathcal L_{\mathcal D_i}
            (\phi,w_{\lambda,i}^*(\phi))
    \right\rVert
    \le
    \frac{C_d}{\lambda}.
$$
Thus
$$
    \left\lVert w_{\lambda,i}^*(\phi)-w_i^*(\phi)\right\rVert
    \le
    \frac{C_d}{\lambda\mu}.
$$
Because $C_d\le \ell$ and $\kappa=\ell/\mu$, the squared bound is
$O(\kappa^2/\lambda^2)$.
\end{proof}

\begin{lemma}[Inexact descent]
\label{lem:inexact-descent}
Let $F$ be $L$-gradient Lipschitz, and let
$x^+=x-\eta\widehat g$ with $\eta\le 1/(2L)$.  Then
\begin{align*}
    F(x^+)
    &\le
    F(x)
    -\frac{\eta}{2}\left\lVert \nabla F(x)\right\rVert^2
    -\frac{1}{4\eta}\left\lVert x^+-x\right\rVert^2\\
    &\quad
    +\frac{\eta}{2}
    \left\lVert \widehat g-\nabla F(x)\right\rVert^2 .
\end{align*}
\end{lemma}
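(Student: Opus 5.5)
We prove the Inexact descent lemma by starting from the descent inequality implied by $L$-gradient Lipschitzness and then splitting the inner-product term to isolate the gradient error. The first step is the standard quadratic upper bound
\[
F(x^+) \le F(x) + \langle \nabla F(x), x^+-x\rangle + \tfrac{L}{2}\lVert x^+-x\rVert^2 .
\]
Since $x^+-x=-\eta\widehat g$, we have $\widehat g = -(x^+-x)/\eta$. We then write $\nabla F(x) = \widehat g + (\nabla F(x)-\widehat g)$ inside the inner product. The $\widehat g$ part contributes
\[
\langle \widehat g, x^+-x\rangle = -\tfrac{1}{\eta}\lVert x^+-x\rVert^2 .
\]
Combining this with the smoothness term and using $\eta\le 1/(2L)$, i.e.\ $L/2\le 1/(4\eta)$, the coefficient of $\lVert x^+-x\rVert^2$ becomes at most $-\tfrac{3}{4\eta}$.

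The key step is the error term $\langle \nabla F(x)-\widehat g,\, x^+-x\rangle$. We bound it by Young's inequality as
\[
\langle \nabla F(x)-\widehat g,\, x^+-x\rangle \le \tfrac{\eta}{2}\lVert \widehat g-\nabla F(x)\rVert^2 + \tfrac{1}{2\eta}\lVert x^+-x\rVert^2 .
\]
This leaves $-\tfrac{1}{4\eta}\lVert x^+-x\rVert^2$ after absorption, so we obtain
\[
F(x^+)\le F(x) - \tfrac{1}{4\eta}\lVert x^+-x\rVert^2 + \tfrac{\eta}{2}\lVert \widehat g-\nabla F(x)\rVert^2 .
\]
The $-\tfrac{\eta}{2}\lVert\nabla F(x)\rVert^2$ term is still missing at this point.

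The main obstacle is recovering both the $-\tfrac{\eta}{2}\lVert\nabla F(x)\rVert^2$ term and the $-\tfrac{1}{4\eta}\lVert x^+-x\rVert^2$ term simultaneously. The fix is to avoid splitting the inner product as above and instead use the polarization identity
\[
\langle \nabla F(x), -\eta\widehat g\rangle = -\tfrac{\eta}{2}\bigl(\lVert\nabla F(x)\rVert^2 + \lVert\widehat g\rVert^2 - \lVert\widehat g-\nabla F(x)\rVert^2\bigr).
\]
With $\lVert\widehat g\rVert^2 = \lVert x^+-x\rVert^2/\eta^2$, this gives
\[
F(x^+)\le F(x)-\tfrac{\eta}{2}\lVert\nabla F(x)\rVert^2 - \Bigl(\tfrac{1}{2\eta}-\tfrac{L}{2}\Bigr)\lVert x^+-x\rVert^2 + \tfrac{\eta}{2}\lVert \widehat g-\nabla F(x)\rVert^2 .
\]
Finally, $\eta\le 1/(2L)$ implies $\tfrac{1}{2\eta}-\tfrac{L}{2}\ge \tfrac{1}{2\eta}-\tfrac{1}{4\eta}=\tfrac{1}{4\eta}$. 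This yields exactly the stated inequality.
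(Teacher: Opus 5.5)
Your final argument is correct and is essentially the paper's proof: apply the smoothness upper bound, rewrite $\langle \nabla F(x), x^+-x\rangle$ via the polarization identity using $\lVert\widehat g\rVert^2=\lVert x^+-x\rVert^2/\eta^2$, and absorb $\tfrac{L}{2}\lVert x^+-x\rVert^2$ using $L/2\le 1/(4\eta)$. The initial Young's-inequality split is valid but lossy, since it drops the $-\tfrac{\eta}{2}\lVert\nabla F(x)\rVert^2$ term as you note, so it can simply be omitted.
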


\begin{proof}
By the smoothness of $F$,
$$
    F(x^+)
    \le
    F(x)
    +\langle \nabla F(x),x^+-x\rangle
    +\frac{L}{2}\left\lVert x^+-x\right\rVert^2 .
$$
Since $x^+-x=-\eta\widehat g$, we have
$$
    \widehat g=-\frac{x^+-x}{\eta}.
$$
Writing $e=\widehat g-\nabla F(x)$, the identity
$$
    2\langle a,b\rangle
    =
    \left\lVert a\right\rVert^2
    +\left\lVert b\right\rVert^2
    -\left\lVert a-b\right\rVert^2
$$
gives
$$
    \langle\nabla F(x),x^+\!-x\rangle
    =\!
    -\frac{\eta}{2}\!\left\lVert \nabla F(x)\right\rVert^2
    -\frac{1}{2\eta}\!\left\lVert x^+\!-\!x\right\rVert^2
    +\frac{\eta}{2}\!\left\lVert e\right\rVert^2 .
$$
Combining the two displays and using
$L/2\le 1/(4\eta)$ yields the claim.
\end{proof}

The following statement is Theorem 4 in the main text; we restate it
here for completeness before giving the proof.

\begin{theorem}
Suppose Assumption~\ref{assum:basic_technical_Assumpions} holds. Define
$\Delta:=\mathcal L(\phi_0)-\inf_\phi\mathcal L(\phi)$ and
$R:=\frac1m\sum_{i=1}^m
\left\lVert w_{i,0}-w_i^*(\phi_0)\right\rVert^2$.
Let $\eta\asymp\ell^{-1}\kappa^{-3}$ and
$\lambda\asymp
\max\{\kappa/\sqrt R,\ell\kappa^2/\Delta,\ell\kappa^3/\epsilon\}$.
Set the meta-training parameters
$\alpha=1/L_s$, $\tau=1/(2\lambda L_s)$, and
$K=O(\kappa\log(\lambda\ell/\mu))$.
Then the meta-training stage of Algorithm 1 finds an
$\epsilon$-first-order stationary point of $\mathcal L(\phi)$ within
$O(\ell\kappa^4\epsilon^{-2}\log(\ell\kappa/\epsilon))$
first-order oracle calls.
\end{theorem}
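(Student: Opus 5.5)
We will view the meta-training stage as inexact gradient descent on the smooth proxy $\mathcal L_\lambda^*(\phi)=\frac1m\sum_i\mathcal L_{\lambda,i}^*(\phi)$. We then transfer stationarity to $\mathcal L(\phi)$ through Proposition~\ref{pro:5}.

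\textbf{Step 1: the proxy is a good surrogate.} Averaging Proposition~\ref{pro:5} over tasks, for $\lambda\ge 2L_d/\mu$ we get:
\begin{itemize}
\item $\mathcal L_\lambda^*$ is $L:=\mathcal O(\ell\kappa^3)$-gradient Lipschitz;
\item $\|\nabla\mathcal L_\lambda^*-\nabla\mathcal L\|=\mathcal O(\ell\kappa^3/\lambda)$;
\item $|\mathcal L_\lambda^*-\mathcal L|=\mathcal O(\ell\kappa^2/\lambda)$.
\end{itemize}
With $\lambda\gtrsim\ell\kappa^3/\epsilon$, the gradient bias is at most $\epsilon/4$. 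With $\lambda\gtrsim\ell\kappa^2/\Delta$, the initial proxy gap obeys $\mathcal L_\lambda^*(\phi_0)-\inf\mathcal L_\lambda^*\le\Delta+\mathcal O(\ell\kappa^2/\lambda)=\mathcal O(\Delta)$. Choosing $\eta\asymp 1/L$ with $\eta\le 1/(2L)$ makes Lemma~\ref{lem:inexact-descent} applicable.

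\textbf{Step 2: the hypergradient estimator is accurate.} Fix $t$ and $i$.
\begin{itemize}
\item The $z$-loop is gradient descent with step $1/L_s$ on the $\mu$-strongly convex, $L_s$-smooth function $\mathcal L_{\mathcal S_i}(\phi_t,\cdot)$. It contracts at rate $(1-1/\kappa)$, so $\|z^K_{i,t}-w_i^*(\phi_t)\|^2\le e^{-K/\kappa}\|w_0-w_i^*(\phi_t)\|^2$.
\item The $y$-loop is gradient descent on $\mathcal L_{\lambda,i}(\phi_t,\cdot)$. For $\lambda\ge 2L_d/\mu$ this function is $\lambda\mu/2$-strongly convex and $2\lambda L_s$-smooth, so $\tau=1/(2\lambda L_s)$ gives contraction factor $(1-\Theta(1/\kappa))$ toward $w_{\lambda,i}^*(\phi_t)$.
\item Both $\nabla_\phi\mathcal L_{\mathcal D_i}$ and $\lambda\nabla_\phi\mathcal L_{\mathcal S_i}$ are Lipschitz in $w$, with constants $\ell$ and $\lambda\ell$. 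Hence
$\|g_{i,t}-\nabla\mathcal L_{\lambda,i}^*(\phi_t)\|\le \mathcal O(\lambda\ell)\,e^{-cK/\kappa}\bigl(\|w_0-w_i^*(\phi_t)\|+\|w_0-w^*_{\lambda,i}(\phi_t)\|\bigr)$.
\item Lemma~\ref{lem:penalty-minimizer-closeness} converts the second distance into the first plus $\mathcal O(\kappa/\lambda)$.
\end{itemize}

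\textbf{Step 3: the main obstacle, bounding the initial distances uniformly in $t$.} The distances $\|w_0-w_i^*(\phi_t)\|$ can drift as $\phi_t$ moves, since the inner loops restart at $w_0$. I would use that $w_i^*(\cdot)$ is $\kappa$-Lipschitz by the implicit function theorem, so $\|w_0-w_i^*(\phi_t)\|\le\|w_0-w_i^*(\phi_0)\|+\kappa\|\phi_t-\phi_0\|$. The displacement is controlled by the descent inequality's $-\frac1{4\eta}\|\phi_{t+1}-\phi_t\|^2$ term: summing and applying Cauchy–Schwarz gives $\|\phi_t-\phi_0\|^2\le t\sum_s\|\phi_{s+1}-\phi_s\|^2=\mathcal O(t\eta\Delta)$. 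This may require a more careful argument, e.g. warm-starting in the analysis, or simply absorbing the growth into the logarithm. Since the error decays exponentially in $K$, a polynomial-in-$T$ bound on the distance is harmless. The term $\kappa/\sqrt R$ in $\lambda$ normalizes $R$ against $\kappa^2/\lambda^2$. Then $K=\mathcal O(\kappa\log(\lambda\ell/\mu))$, inflated by a logarithmic factor if necessary, makes the averaged estimator error at most $\epsilon/4$.

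\textbf{Step 4: telescoping.} Write $\hat g_t=\frac1m\sum_i g_{i,t}$. Lemma~\ref{lem:inexact-descent} gives
$\frac\eta2\|\nabla\mathcal L^*_\lambda(\phi_t)\|^2\le\mathcal L^*_\lambda(\phi_t)-\mathcal L^*_\lambda(\phi_{t+1})+\frac\eta2\|\hat g_t-\nabla\mathcal L^*_\lambda(\phi_t)\|^2$.
Summing over $t$ yields $\min_t\|\nabla\mathcal L_\lambda^*(\phi_t)\|^2\le 2\Delta/(\eta T)+\epsilon^2/16$. With $T\asymp\ell\kappa^3\Delta\epsilon^{-2}$, some iterate has $\|\nabla\mathcal L^*_\lambda\|\le\epsilon/2$, and Step 1 then gives $\|\nabla\mathcal L\|\le\epsilon$.

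\textbf{Step 5: counting oracle calls.} Each outer step costs $\mathcal O(K)=\mathcal O(\kappa\log(\ell\kappa/\epsilon))$ gradient calls per task. The total is therefore $\mathcal O(\ell\kappa^4\epsilon^{-2}\log(\ell\kappa/\epsilon))$, with $\Delta$ treated as a constant in the stated rate.
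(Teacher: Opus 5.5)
Your proof is correct in substance, but it controls the inner-solver initialization error differently from the paper.

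\textbf{The paper's route.} The proof analyzes warm-started inner loops ($y_{i,t+1}^0=y_{i,t}^K$, $z_{i,t+1}^0=z_{i,t}^K$) and tracks $\delta_t=\frac1m\sum_i(\|y_{i,t}^0-w_{\lambda,i}^*(\phi_t)\|^2+\|z_{i,t}^0-w_i^*(\phi_t)\|^2)$. From the $O(\kappa)$-Lipschitzness of \emph{both} solution maps it derives the linear recursion $\delta_{t+1}\le 2q\delta_t+c_\delta\kappa^2\|\phi_{t+1}-\phi_t\|^2$. It then cancels the drift against the $-\frac{1}{4\eta}\|\phi_{t+1}-\phi_t\|^2$ term of Lemma~\ref{lem:inexact-descent}. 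This gives a horizon-free $K=O(\kappa\log(\lambda\kappa))$, and the initial error appears only as an additive $R$ in the numerator $(\Delta+R)/(\eta N)$.

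\textbf{Your route.} You analyze the restart-from-$w_0$ scheme that Algorithm~\ref{alg:ABO-Med} actually writes down. You need Lipschitzness only of $w_i^*$, handling $w_{\lambda,i}^*$ through Lemma~\ref{lem:penalty-minimizer-closeness}. The factor $e^{-\Theta(K/\kappa)}$ then swamps a drift that is polynomial in $T$. The cost is an extra $\log T$ and $\log(\Delta+R)$ inside $K$. These disappear into the stated $\log(\ell\kappa/\epsilon)$ once $\Delta$ and $R$ are treated as constants, which is what the paper does anyway.

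\textbf{What to make explicit.} Your bound $\sum_s\|\phi_{s+1}-\phi_s\|^2=O(\eta\Delta)$ comes from the same descent inequality whose error term depends on $\|\phi_t-\phi_0\|$. So it must be run as an induction on $t$. Set $D:=\mathcal L_\lambda^*(\phi_0)-\inf\mathcal L_\lambda^*$ and assume $\sum_{s<t}\|\phi_{s+1}-\phi_s\|^2\le 4\eta(D+\eta T\epsilon^2/32)$. By Cauchy--Schwarz this gives $\|\phi_t-\phi_0\|^2=O(T\eta\Delta)$ for all $t\le T$. Choose $K$ so that, under this hypothesis, $\|\hat g_t-\nabla\mathcal L_\lambda^*(\phi_t)\|^2\le\epsilon^2/16$. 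Summing the descent inequality up to $t$ then reproduces the hypothesis at $t+1$, and the induction closes.
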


The meta-testing parameter $K'$ is not part of the above
meta-training oracle complexity.  It is chosen separately in the
meta-testing stage; by Lemma 6, setting
$K'=\widetilde O(\sqrt\kappa)$ makes the AGD solver produce a
sufficiently accurate approximation of each test-task minimizer
$w_i^*(\phi_T)$.

\begin{proof}
Let $L_\lambda$ be the gradient Lipschitz constant of the averaged
proxy $\mathcal L_\lambda^*(\phi)$.  Proposition 5 applies to each
task objective $\mathcal L_{\lambda,i}^*(\phi)$.  Averaging the
task-wise bounds and using the choice of $\lambda$ (enlarging the
hidden constant so that $\lambda\ge 2L_d/\mu$), we have
\begin{enumerate}[label=\alph*.]
    \item
    $\sup_{\phi\in\Phi}
    \left\lVert
        \nabla\mathcal L_\lambda^*(\phi)-\nabla\mathcal L(\phi)
    \right\rVert=O(\epsilon)$.
    \item
    $\mathcal L_\lambda^*(\phi_0)
    -\inf_{\phi}\mathcal L_\lambda^*(\phi)
    \le
    \Delta
    +2\sup_{\phi}
    \left|\mathcal L_\lambda^*(\phi)-\mathcal L(\phi)\right|
    =O(\Delta)$.
    \item
    $L_\lambda
    :=
    \sup_{\phi\in\Phi}
    \left\lVert \nabla^2\mathcal L_\lambda^*(\phi)\right\rVert
    =O(\ell\kappa^3)$.
\end{enumerate}
We next control the initial distance to the two inner minimizers
task by task.  By Lemma~\ref{lem:penalty-minimizer-closeness},
$$
    \frac1m\sum_{i=1}^m
    \left\lVert w_{\lambda,i}^*(\phi_0)-w_i^*(\phi_0)\right\rVert^2
    =
    O\!\left(\frac{\kappa^2}{\lambda^2}\right).
$$
Since $\lambda\gtrsim\kappa/\sqrt R$, this bound is $O(R)$.  Therefore,
by the triangle inequality and
$(a+b)^2\le 2a^2+2b^2$,
$$
    \begin{aligned}
    &\frac1m\sum_{i=1}^m
    \left\lVert w_{i,0}-w_{\lambda,i}^*(\phi_0)\right\rVert^2\\
    &\quad \le
    \frac2m\sum_{i=1}^m
    \left\lVert w_{i,0}-w_i^*(\phi_0)\right\rVert^2\\
    &\qquad
    +\frac2m\sum_{i=1}^m
    \left\lVert w_i^*(\phi_0)-w_{\lambda,i}^*(\phi_0)\right\rVert^2
    =
    O(R).
    \end{aligned}
$$
Together with the definition of $R$, this yields
$$
    \begin{aligned}
    \frac1m\sum_{i=1}^m
    \left\lVert w_{i,0}-w_{\lambda,i}^*(\phi_0)\right\rVert^2
    &+
    \frac1m\sum_{i=1}^m
    \left\lVert w_{i,0}-w_i^*(\phi_0)\right\rVert^2\\
    =O(R).
    \end{aligned}
$$
Thus it remains to show that Algorithm 1 finds an
$O(\epsilon)$-stationary point of the proxy objective
$\mathcal L_\lambda^*$.

At outer iteration $t$, for each sampled meta-training task
$\mathcal T_i$, define
$$
    \begin{aligned}
    w_i^*(\phi)
    &:=
    \argmin_w\mathcal L_{\mathcal S_i}(\phi,w),\\
    w_{\lambda,i}^*(\phi)
    &:=
    \argmin_w
    \left\{
        \mathcal L_{\mathcal D_i}(\phi,w)
        +\lambda \mathcal L_{\mathcal S_i}(\phi,w)
    \right\}.
    \end{aligned}
$$
The iterates $y_{i,t}^K$ and $z_{i,t}^K$ in Algorithm 1 are the
$K$-step outputs for approximating $w_{\lambda,i}^*(\phi_t)$ and
$w_i^*(\phi_t)$, respectively.  The first-order estimator used for the
update of $\phi$ is exactly
$$
    \begin{aligned}
    \widehat G_t
    &:=
    \frac1m\sum_{i=1}^m g_{i,t},\\
    g_{i,t}
    &:=
    \nabla_\phi\mathcal L_{\mathcal D_i}(\phi_t,y_{i,t}^K)
    +\lambda\Bigl(
        \nabla_\phi\mathcal L_{\mathcal S_i}
            (\phi_t,y_{i,t}^K)\\
    &\qquad\qquad
        -\nabla_\phi\mathcal L_{\mathcal S_i}
            (\phi_t,z_{i,t}^K)
    \Bigr).
    \end{aligned}
$$
and the outer update is
$\phi_{t+1}=\phi_t-\eta\widehat G_t$.
Applying Lemma~\ref{lem:inexact-descent} to
$F=\mathcal L_\lambda^*$, $x=\phi_t$, $x^+=\phi_{t+1}$, and
$\widehat g=\widehat G_t$ gives
\begin{align}
    \mathcal L_\lambda^*(\phi_{t+1})
    &\le
    \mathcal L_\lambda^*(\phi_t)
    -\frac{\eta}{2}
    \left\lVert\nabla\mathcal L_\lambda^*(\phi_t)\right\rVert^2        \notag\\
    &\quad
    -\frac{1}{4\eta}
    \left\lVert\phi_{t+1}-\phi_t\right\rVert^2
    +\frac{\eta}{2}
    \left\lVert
        \widehat G_t-\nabla\mathcal L_\lambda^*(\phi_t)
    \right\rVert^2 .
    \label{eq:descent}
\end{align}
For $k\in\{0,K\}$, set
$$E^y_{i,t,k}:=
    \left\lVert y_{i,t}^k-w_{\lambda,i}^*(\phi_t)\right\rVert,
    E^z_{i,t,k}:=
    \left\lVert z_{i,t}^k-w_i^*(\phi_t)\right\rVert .
$$
Also set
$$
    \mathcal E_t
    :=
    \left\lVert
        \widehat G_t-\nabla\mathcal L_\lambda^*(\phi_t)
    \right\rVert .
$$
Using the Lipschitz continuity of the partial gradients in $w$, we
first obtain
\begin{align*}
    \mathcal E_t
    &\le
    \frac1m\sum_{i=1}^m
    \left(
        (L_d+\lambda L_s)E^y_{i,t,K}
        +\lambda L_sE^z_{i,t,K}
    \right).
\end{align*}
Since $L_d,L_s\le\ell$ and $\lambda\ge1$ after enlarging the absolute
constant in the choice of $\lambda$ if necessary, the right-hand side
is bounded by
\begin{align*}
    \mathcal E_t
    &\le
    \frac1m\sum_{i=1}^m
    \left(
        2\lambda\ell E^y_{i,t,K}
        +\lambda\ell E^z_{i,t,K}
    \right).
\end{align*}
By Jensen's inequality and $(2a+b)^2\le 8a^2+2b^2$,
\begin{align*}
    \mathcal E_t^2
    &\le
    c_0\lambda^2\ell^2
    \frac1m\sum_{i=1}^m
    \left((E^y_{i,t,K})^2+(E^z_{i,t,K})^2\right)
\end{align*}
for a universal constant $c_0>0$.
The $K$ inner gradient steps satisfy
$$
    (E^y_{i,t,K})^2
    \le
    q(E^y_{i,t,0})^2,
    \qquad
    (E^z_{i,t,K})^2
    \le
    q(E^z_{i,t,0})^2,
$$
where
$$
    q:=\exp\!\left(-\frac{\mu K}{4\ell}\right)
$$
is a common upper bound on the contraction factors obtained from the
linear convergence of gradient descent on the penalized lower problem
and on the support-loss subproblem.
Define the tracking error
$$
    \delta_t:=
    \frac1m\sum_{i=1}^m
    \left((E^y_{i,t,0})^2+(E^z_{i,t,0})^2\right).
$$
Hence,
\begin{align}
    \mathcal E_t^2
    \le
    c_0\lambda^2\ell^2 q\delta_t,
    \label{eq:grad-error}
\end{align}

The next outer iteration warm-starts the inner solvers from the
previous inner outputs, namely
$y_{i,t+1}^0=y_{i,t}^K$ and $z_{i,t+1}^0=z_{i,t}^K$.  The solution
maps $w_{\lambda,i}^*(\cdot)$ and $w_i^*(\cdot)$ are
$O(L_s/\mu)=O(\kappa)$-Lipschitz under Assumption 1; hence, for some
universal constant $c_w>0$,
$$
\begin{aligned}
&\left\lVert
    w_{\lambda,i}^*(\phi_{t+1})
    -w_{\lambda,i}^*(\phi_t)
\right\rVert
+
\left\lVert
    w_i^*(\phi_{t+1})-w_i^*(\phi_t)
\right\rVert \\
&\qquad \le
c_w\kappa
\left\lVert \phi_{t+1}-\phi_t \right\rVert .
\end{aligned}
$$
Thus, by Young's inequality, for each $i$,
\begin{align*}
    &\left\lVert
        y_{i,t+1}^0-w_{\lambda,i}^*(\phi_{t+1})
    \right\rVert^2
    =
    \left\lVert
        y_{i,t}^K-w_{\lambda,i}^*(\phi_{t+1})
    \right\rVert^2        \\
    &\le \!
    2(E^y_{i,t,K})^2
    \!+\!2c_w^2\kappa^2
    \left\lVert \phi_{t+1}\!-\!\phi_t \right\rVert^2,
\end{align*}
and similarly,
\begin{align*}
    \left\lVert
        z_{i,t+1}^0-w_i^*(\phi_{t+1})
    \right\rVert^2
    &\!\le\!
    2(E^z_{i,t,K})^2
    \!+\!2c_w^2\kappa^2
    \left\lVert \phi_{t+1}\!-\!\phi_t \right\rVert^2 .
\end{align*}
Adding the last two bounds and averaging over $i=1,\ldots,m$ gives
\begin{align*}
    \delta_{t+1}
    &\le
    \frac{2}{m}\sum_{i=1}^m
    \left((E^y_{i,t,K})^2+(E^z_{i,t,K})^2\right)                  \\
    &\quad
    +4c_w^2\kappa^2\left\lVert \phi_{t+1}-\phi_t \right\rVert^2 .
\end{align*}
The contraction estimates for the two inner solvers imply
\begin{align*}
    &\frac1m\sum_{i=1}^m
    \left((E^y_{i,t,K})^2+(E^z_{i,t,K})^2\right)                  \\
    &\qquad\le
    \frac{q}{m}\sum_{i=1}^m
    \left((E^y_{i,t,0})^2+(E^z_{i,t,0})^2\right)
    =
    q\delta_t .
\end{align*}
Combining the two displays and absorbing the numerical constant
$4c_w^2$ into $c_\delta$ gives
\begin{align}
    \delta_{t+1}
    &\le
    2q\,\delta_t
    +c_\delta\kappa^2
    \left\lVert \phi_{t+1}-\phi_t \right\rVert^2 ,
    \label{eq:delta-one}
\end{align}
for a universal constant $c_\delta>0$.  This constant is independent
of $c_0$; it only absorbs the constants from the Lipschitz bounds for
the solution maps and Young's inequality.  Choose $K$ large enough so
that $2q\le 1/2$.  Then
\begin{align}
    \delta_t
    \le
    2^{-t}\delta_0
    +c_\delta\kappa^2
    \sum_{j=0}^{t-1}2^{-(t-1-j)}
        \left\lVert \phi_{j+1}-\phi_j \right\rVert^2 .
    \label{eq:delta-telescope}
\end{align}

Let
$$
    \gamma:=c_0\lambda^2\ell^2q .
$$
Combining \eqref{eq:descent}, \eqref{eq:grad-error}, and
\eqref{eq:delta-telescope}, and telescoping over
$t=0,\ldots,N-1$, yields
\begin{align}
    \frac{\eta}{2}
    \sum_{t=0}^{N-1}
    \left\lVert\nabla\mathcal L_\lambda^*(\phi_t)\right\rVert^2
    &\le
    \mathcal L_\lambda^*(\phi_0)-\inf_\phi\mathcal L_\lambda^*(\phi)
    +O(\eta\gamma\delta_0)                                      \notag\\
    &
    -\!\left(\!\!
        \frac{1}{4\eta}
        \!-\!O(\eta\gamma\kappa^2)
    \!\!\right)
    \sum_{t=0}^{N-1}
    \left\lVert\phi_{t+1}-\phi_t\right\rVert^2 .
    \label{eq:main-telescope}
\end{align}
Since $\kappa=\ell/\mu$, the definition of $q$ gives
$$
    q=\exp\!\left(-\frac{K}{4\kappa}\right).
$$
Let $a_1,a_2>0$ denote sufficiently small universal constants.  The
constant $a_1$ is chosen so that
$O(\eta\gamma\kappa^2)\le 1/(4\eta)$ in
\eqref{eq:main-telescope}, which makes the displacement term
nonpositive.  The constant $a_2$ is chosen so that the accumulated
initial inner-solver error $O(\eta\gamma\delta_0)$ is absorbed into
the final numerator.  Thus it is enough to impose
$$
    \gamma=c_0\lambda^2\ell^2q
    \le
    \min\left\{
        \frac{a_1}{\eta^2\kappa^2},
        \frac{a_2}{\eta}
    \right\}
$$
is guaranteed if
$$
    q
    \le
    \min\left\{
        \frac{a_1}{c_0\eta^2\kappa^2\lambda^2\ell^2},
        \frac{a_2}{c_0\eta\lambda^2\ell^2}
    \right\}.
$$
Equivalently, it suffices to choose
$$
    K
    \ge
    4\kappa
    \log\!\left(
        \max\left\{
            e,
            \frac{c_0\eta^2\kappa^2\lambda^2\ell^2}{a_1},
            \frac{c_0\eta\lambda^2\ell^2}{a_2}
        \right\}
    \right).
$$
With $\eta\asymp(\ell\kappa^3)^{-1}$, the logarithm is absorbed by
the prescribed choice
$$
    K=O\!\left(\kappa\log(\lambda\kappa)\right),
$$
after enlarging the hidden constant.  Indeed, substituting
$\eta\asymp(\ell\kappa^3)^{-1}$ gives
$$
    \eta^2\kappa^2\lambda^2\ell^2
    \asymp
    \frac{\lambda^2}{\kappa^4},
    \qquad
    \eta\lambda^2\ell^2
    \asymp
    \frac{\lambda^2\ell}{\kappa^3}.
$$
Moreover, in the nontrivial regime $\epsilon\le1$, the prescribed choice
$\lambda\gtrsim \ell\kappa^3/\epsilon$ implies
$\ell\le \lambda\kappa$.  Hence both logarithmic terms above are
bounded by a universal multiple of $\log(\lambda\kappa)$.
Consequently, $K=O(\kappa\log(\lambda\kappa))$ is sufficient.  Hence
$\gamma$ satisfies the desired bound.  The last term in
\eqref{eq:main-telescope} is then nonpositive, and since
$\delta_0=O(R)$, while $O(\eta\gamma\delta_0)=O(R)$ by the choice of
$a_2$,
$$
    \frac1N
    \sum_{t=0}^{N-1}
        \left\lVert \nabla\mathcal L_\lambda^*(\phi_t)\right\rVert^2
    \le
    O\!\left(
        \frac{\Delta+R}{\eta N}
    \right).
$$
Taking
$$
    N=
    O\!\left(
        (\Delta+R)\ell\kappa^3\epsilon^{-2}
    \right)
$$
ensures that some iterate $\phi_{\hat t}$ satisfies
$$
    \left\lVert \nabla\mathcal L_\lambda^*(\phi_{\hat t}) \right\rVert\le c\epsilon
$$
for a universal constant $c>0$.  Proposition~\ref{pro:5}(a) and the choice
$\lambda\gtrsim \ell\kappa^3/\epsilon$ further imply
$$
    \begin{aligned}
    \left\lVert\nabla\mathcal L(\phi_{\hat t})\right\rVert
    \le
    &\left\lVert
        \nabla\mathcal L_\lambda^*(\phi_{\hat t})
    \right\rVert\\
    +
    &\left\lVert
        \nabla\mathcal L_\lambda^*(\phi_{\hat t})
        -\nabla\mathcal L(\phi_{\hat t})
    \right\rVert
    =
    O(\epsilon).
    \end{aligned}
$$
Rescaling the hidden numerical constants gives an $\epsilon$-first-order
stationary point of $\mathcal L$.

Each meta-training outer iteration uses
$$
    K+1
    =
    O\!\left(
        \kappa\log\!\left(\frac{\ell\kappa}{\epsilon}\right)
    \right)
$$
first-order oracle calls.  Multiplying by the outer iteration bound gives
$$
    O\!\left(
        (\Delta+R)\ell\kappa^4\epsilon^{-2}
        \log\!\left(\frac{\ell\kappa}{\epsilon}\right)
    \right)
$$
oracle calls.  This is the explicit version of the complexity bound.
Treating the fixed initial quantities $\Delta$ and $R$ as
problem-dependent constants, as in the reference proof, gives the
stated complexity.
\end{proof}

It is near-optimal in the sense that it matches the lower complexity bound $\Omega(\epsilon^{-2})$ in~\cite{carmon2020lower}.

\subsection{Dataset Split}
\label{app:dataset_split}

Following the meta-learning setting, the datasets were divided into meta-training and meta-testing classes. Specifically, 4 classes in Pap-Smear and ISIC2018 were used for meta-training and the remaining 3 classes were used for meta-testing. For BreaKHis40X, 5 classes were used for meta-training and the remaining 3 classes were used for meta-testing. The resulting meta-training and meta-testing splits were used to evaluate the few-shot generalization ability of the model on unseen classes.For the cross-domain experiments, BCHI was used only as the target domain, with all 3 histological grades used as meta-testing classes.

\subsection{Experiment Details}
\label{app:experiment_details}

For fair comparison, all methods were evaluated under the same experimental protocol, including the backbone, few-shot setting, and data augmentation strategy. Detailed descriptions of the experiment environment, network architecture, augmentation strategy, training/testing protocol, and hyperparameter settings.

All experiments were conducted on a server equipped with a 25-vCPU Intel Xeon Platinum 8470Q processor, 90GB of memory, and an NVIDIA RTX 5090 GPU with 32GB VRAM. The algorithm was implemented in Python 3.12 based on the PyTorch 2.8.0 deep learning framework, with CUDA 12.8 used for GPU acceleration, under the Ubuntu 22.04 operating system.

For fair comparison with baselines, we adopted a CNN4 backbone with an input size of 84×84, consisting of four 3×3 convolutional layers with 64 filters, each followed by batch normalization and ReLU activation. The random seed was set to 10, and all experiments were conducted under a 3-way setting with 3-shot, 5-shot, and 10-shot evaluation.During meta-training, we ran 5000 iterations with a task batch size of 32 and a regularization parameter of 0.5, and used the same baseline augmentation as Tra-MAML~\cite{voon2025trapezoidal,wei2019closer}, including random resized crop, image jitter, random vertical flip, and random horizontal flip, for fair comparison.During testing, the best weights obtained in meta-training were evaluated on 600 meta-testing tasks, with 30 adaptation steps performed for each task. 

The detailed hyperparameter settings of ABO-Med~(Algorithm~\ref{alg:ABO-Med}) used in the comparative experiments with other few-shot learning methods on the Pap-Smear, ISIC2018, and BreakHis40X datasets are summarized in Table~\ref{tab:hyperparameters}.
\begin{table}[H]
\footnotesize
\centering
\caption{Experimental hyperparameter settings for ABO-Med.}
\label{tab:hyperparameters}
\begin{tabular}{p{0.48\linewidth} p{0.42\linewidth}}
\toprule
\textbf{Hyperparameter} & \textbf{Value} \\
\midrule
Outer learning rate, $\eta$ & 1 \\
Inner learning rate for $z$ update, $\alpha$ & 0.005 \\
Inner learning rate for $y$ update, $\tau$ & 0.05 \\
Lagrangian multiplier, $\lambda$ & 1 \\
Number of inner iterations, $K$ & 30 \\
 number of adaptation steps $K'$ & 30 \\
Meta-testing learning rate, $\beta$ & 0.01 \\
Momentum parameter, $\theta$ & 0.9 \\
\bottomrule
\end{tabular}
\end{table}

The hyperparameters of ABO-Med were selected based on validation performance. Specifically, we performed a grid search over the main optimization parameters within predefined ranges and chose the final configuration according to the best validation results. The search ranges were set as follows: $\eta \in \{0.001, 0.01, 0.1, 0.5, 1, 5, 10, 20, 100\}$, $\alpha \in \{0.001, 0.005, 0.01\}$, $\tau \in \{0.01, 0.05, 0.1\}$, $\lambda \in \{0.1, 0.5, 1\}$, $K \in \{10, 20, 30, 40, 50\}$, $K' \in \{10, 20, 30, 40, 50\}$, $\beta \in \{0.001, 0.005, 0.01\}$, and $\theta \in \{0.5, 0.7, 0.9\}$. For ABO-Med with MedRAug, we used the same hyperparameter settings as ABO-Med to isolate the effect of the augmentation strategy. The final hyperparameter settings are reported in Table~\ref{tab:hyperparameters}.

\subsection{Ablation Study on Backbones}
\label{app:ablation_study_backbones}

To study the effect of different backbones, we compared CNN4, ResNetFeats10, ResNetFeats18, ResNetFeats25, and ViTFeatsTiny on three medical few-shot classification datasets. To ensure a fair comparison, all methods were evaluated using the same baseline data augmentation settings. All experiments were conducted under the 3-way 3-, 5-, and 10-shot settings, and the results are shown in Table~\ref{tab:backbone_ablation}. Although CNN4 has the fewest trainable parameters, it achieved the best overall performance. In contrast, deeper ResNet backbones and ViTFeatsTiny did not provide further improvements.

These results suggest that simply increasing backbone capacity does not necessarily lead to better performance in medical few-shot image classification. Unlike natural-image few-shot benchmarks, where categories often exhibit larger semantic differences and high-capacity backbones can benefit from rich visual diversity, medical image tasks are usually more fine-grained and rely on subtle morphological or texture differences. Under such limited-support settings, deeper and more complex models may be more prone to overfitting, or may require stronger regularization and larger annotated medical datasets to fully exploit their capacity~\cite{al2019reinforcing}. In contrast, the lightweight CNN4 backbone provides a more suitable balance between representation capacity and model complexity in our experiments, leading to more stable performance across different medical datasets and shot settings. 

\begin{table}[t]
\footnotesize
\centering
\caption{Ablation study of different feature backbones on the Pap-Smear, ISIC2018, and BreakHis40X datasets. The number of trainable parameters of the feature extractor is reported below each backbone name. Results are presented as mean test accuracy (\%) with 95\% confidence intervals, and the best results are shown in bold.}
\label{tab:backbone_ablation}
\renewcommand{\arraystretch}{1.12}
\setlength{\tabcolsep}{3.5pt}
\footnotesize
\begin{tabularx}{\columnwidth}{
>{\raggedright\arraybackslash}p{2.15cm}
>{\raggedright\arraybackslash}p{1.95cm}
>{\centering\arraybackslash}p{1.0cm}
>{\centering\arraybackslash}p{1.0cm}
>{\centering\arraybackslash}p{1.0cm}
}
\toprule
Backbone & Dataset & 3-shot & 5-shot & 10-shot \\
\midrule

\multirow{3}{*}{\shortstack[l]{CNN4\\{\scriptsize (0.113M)}}}
& Pap-Smear
& \textbf{81.51 $\pm$ 0.53}
& \textbf{86.78 $\pm$ 0.65}
& \textbf{87.62 $\pm$ 0.61} \\

& ISIC2018
& \textbf{53.56 $\pm$ 0.65}
& \textbf{58.28 $\pm$ 0.52}
& \textbf{63.67 $\pm$ 0.64} \\

& BreakHis40X
& \textbf{55.37 $\pm$ 0.83}
& \textbf{69.07 $\pm$ 0.79}
& \textbf{76.17 $\pm$ 0.89} \\
\midrule

\multirow{3}{*}{\shortstack[l]{ResNetFeats10\\{\scriptsize (4.898M)}}}
& Pap-Smear
& 75.57 $\pm$ 0.68
& 80.69 $\pm$ 0.55
& 84.18 $\pm$ 0.52\\

& ISIC2018
& 52.81 $\pm$ 0.64
& 55.59 $\pm$ 0.61
& 61.68 $\pm$ 0.63\\

& BreakHis40X
& 53.46 $\pm$ 0.63
& 65.92 $\pm$ 0.57
& 73.65 $\pm$ 0.54 \\
\midrule

\multirow{3}{*}{\shortstack[l]{ResNetFeats18\\{\scriptsize (11.169M)}}}
& Pap-Smear
& 72.87 $\pm$ 0.66
& 77.42 $\pm$ 0.62
& 81.19 $\pm$ 0.58 \\

& ISIC2018
& 51.94 $\pm$ 0.69
& 52.23 $\pm$ 0.63
& 57.48 $\pm$ 0.59\\

& BreakHis40X
& 52.48 $\pm$ 0.58
& 61.07 $\pm$ 0.52
& 68.42 $\pm$ 0.45\\
\midrule

\multirow{3}{*}{\shortstack[l]{ResNetFeats25\\{\scriptsize (17.440M)}}}
& Pap-Smear
& 71.11 $\pm$ 0.64
& 75.69 $\pm$ 0.62
& 78.48 $\pm$ 0.55 \\

& ISIC2018
& 48.56 $\pm$ 0.63
& 50.39 $\pm$ 0.65
& 54.25 $\pm$ 0.57\\

& BreakHis40X
& 49.13 $\pm$ 0.69
& 56.16 $\pm$ 0.46
& 63.86 $\pm$ 0.66 \\
\midrule

\multirow{3}{*}{\shortstack[l]{ViTFeatsTiny\\{\scriptsize (5.524M)}}}
& Pap-Smear
& 74.43 $\pm$ 0.73
& 78.64 $\pm$ 0.67
& 80.77 $\pm$ 0.64\\

& ISIC2018
&  51.39 $\pm$ 0.68
&  54.74 $\pm$ 0.63
&  60.23 $\pm$ 0.57\\

& BreakHis40X
& 52.91 $\pm$ 0.76
& 63.44 $\pm$ 0.71
& 70.28 $\pm$ 0.65\\

\bottomrule
\end{tabularx}
\end{table}

\subsection{Ablation Study on Data Augmentation}

\begin{figure*}[h]
    \centering
    \caption{Examples of different data augmentation strategies on the Pap-Smear, ISIC2018, and BreakHis datasets.}
    \includegraphics[width=1\linewidth]{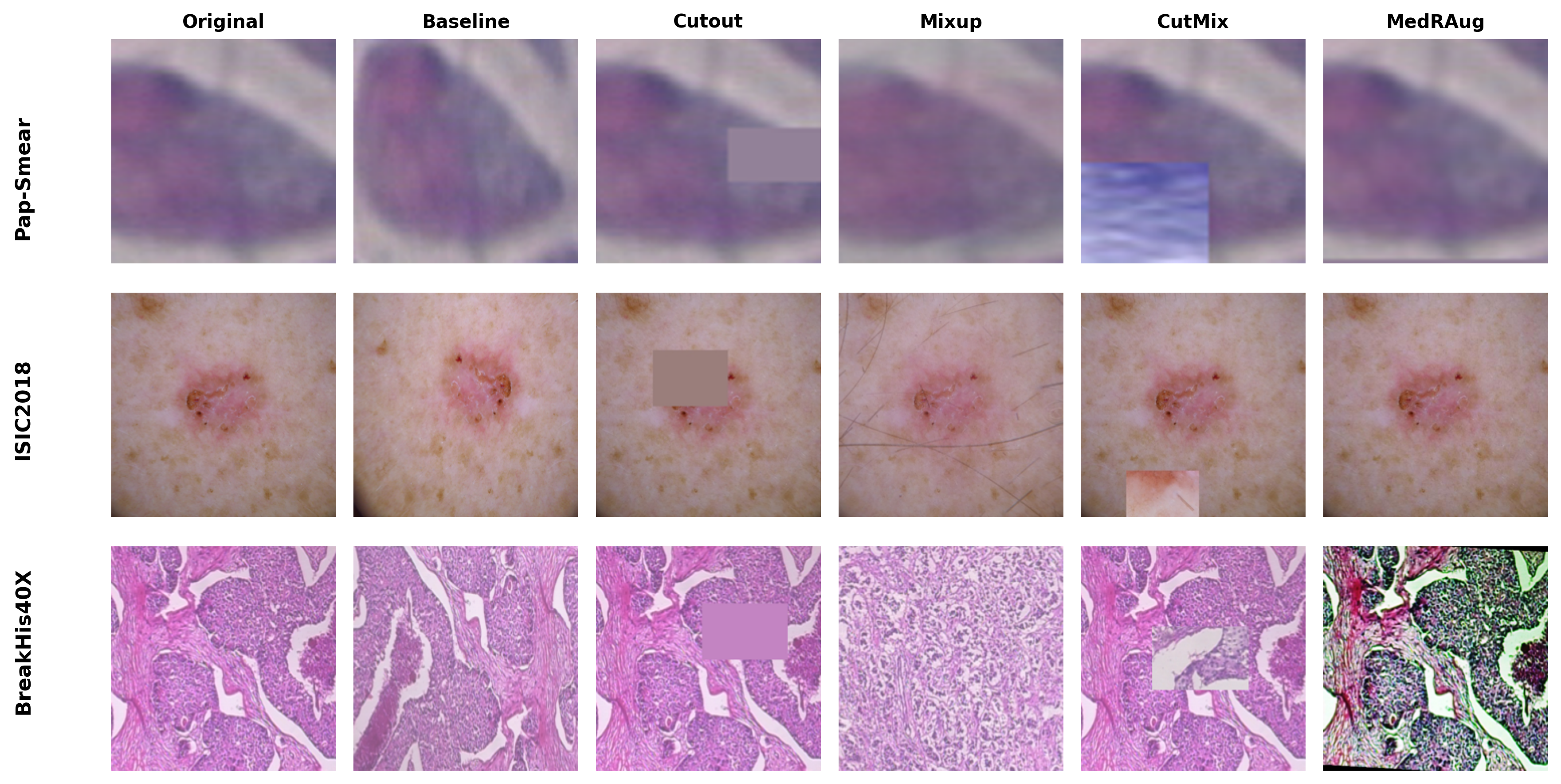}
    \label{fig:augmentation_examples}
\end{figure*}

\subsubsection{Details of Dataset-specific Augmentation Spaces}
\label{app:augmentation_space}

This section provides the detailed augmentation operation spaces adopted by MedRAug for different datasets. While the main text presents the overall framework and augmentation strategy, the tables in this appendix further summarize the selected augmentation operations and the rationale behind their design for each medical imaging dataset. Specifically, the augmentation space for the Pap-smear dataset is designed to introduce moderate variations in microscopic imaging while preserving cellular morphological characteristics. The augmentation space for the ISIC2018 dataset focuses on variations in lesion appearance, imaging angle, and illumination conditions. In contrast, the augmentation space for the BreakHis40X dataset places greater emphasis on preserving tissue structural information while simulating staining differences and imaging variations introduced during the scanning process. The detailed augmentation operation spaces for these datasets are summarized in Tables \ref{tab:pap_aug_space}, \ref{tab:isic_aug_space}, and \ref{tab:breakhis_aug_space}, respectively.

\begin{table}[t]
\centering
\caption{Augmentation operation space for Pap-smear cytology images.}
\label{tab:pap_aug_space}
\renewcommand{\arraystretch}{1.12}
\setlength{\tabcolsep}{3pt}
\scriptsize
\begin{tabularx}{\columnwidth}{
@{}
>{\raggedright\arraybackslash}p{1.35cm}
>{\raggedright\arraybackslash}p{2.05cm}
>{\raggedright\arraybackslash}X
@{}}
\toprule
\textbf{Operation} & \textbf{Type} & \textbf{Rationale} \\
\midrule
Identity & Baseline operation & Keeps the original cell image unchanged and avoids unnecessary structural perturbation. \\
Rotate & Geometric transformation & Simulates cell orientation variation, since cytology images usually have no fixed directional semantics. \\
Scale & Geometric transformation & Simulates differences in cell size and microscopic magnification. \\
Brightness & Photometric transformation & Simulates illumination and exposure variation in microscopic imaging. \\
Contrast & Photometric transformation & Simulates contrast differences caused by staining and imaging conditions. \\
RGBShift & Color transformation & Applies mild channel-wise color perturbation to simulate staining and color response variation. \\
Gamma & Intensity transformation & Simulates nonlinear brightness response caused by microscope imaging and exposure differences. \\
\bottomrule
\end{tabularx}
\end{table}

\begin{table}[t]
\centering
\caption{Augmentation operation space for ISIC dermoscopic images.}
\label{tab:isic_aug_space}
\renewcommand{\arraystretch}{1.12}
\setlength{\tabcolsep}{3pt}
\scriptsize
\begin{tabularx}{\columnwidth}{
@{}
>{\raggedright\arraybackslash}p{1.35cm}
>{\raggedright\arraybackslash}p{2.05cm}
>{\raggedright\arraybackslash}X
@{}}
\toprule
\textbf{Operation} & \textbf{Type} & \textbf{Rationale} \\
\midrule
Identity & Baseline operation & Preserves the original dermoscopic image and avoids over-augmentation. \\
Rotate & Geometric transformation & Simulates lesion orientation variation caused by camera angle or patient positioning. \\
Translate-X & Geometric transformation & Simulates horizontal lesion position variation within the image. \\
Translate-Y & Geometric transformation & Simulates vertical lesion position variation while preserving lesion appearance. \\
Brightness & Photometric transformation & Simulates illumination differences under different imaging conditions. \\
Contrast & Photometric transformation & Improves robustness to contrast variation between lesion and surrounding skin. \\
Color & Color transformation & Simulates skin tone, device, and acquisition-related color differences. \\
Sharpness & Texture-related transformation & Adjusts local texture clarity and improves robustness to image focus variation. \\
\bottomrule
\end{tabularx}
\end{table}

\begin{table}[t]
\centering
\caption{Augmentation operation space for BreakHis pathological images.}
\label{tab:breakhis_aug_space}
\renewcommand{\arraystretch}{1.12}
\setlength{\tabcolsep}{3pt}
\scriptsize
\begin{tabularx}{\columnwidth}{
@{}
>{\raggedright\arraybackslash}p{1.35cm}
>{\raggedright\arraybackslash}p{2.05cm}
>{\raggedright\arraybackslash}X
@{}}
\toprule
\textbf{Operation} & \textbf{Type} & \textbf{Rationale} \\
\midrule
Identity & Baseline operation & Keeps the original image unchanged and prevents excessive augmentation. \\
Rotate & Geometric transformation & Simulates tissue orientation variation caused by slide preparation and scanning. \\
Translate-X & Geometric transformation & Simulates horizontal position shifts of tissue regions. \\
Translate-Y & Geometric transformation & Simulates vertical position shifts of tissue regions. \\
Shear-X & Geometric transformation & Introduces mild horizontal shape variation while preserving tissue morphology. \\
Shear-Y & Geometric transformation & Introduces mild vertical shape variation while maintaining global tissue structure. \\
Brightness & Photometric transformation & Simulates illumination and scanner exposure variation. \\
Contrast & Photometric transformation & Improves robustness to contrast variation caused by staining and scanning conditions. \\
Color & Color transformation & Simulates color distribution shifts caused by staining and acquisition differences. \\
Sharpness & Texture-related transformation & Adjusts local edge and texture clarity to improve robustness to focus variation. \\
Equalize & Histogram transformation & Normalizes global intensity distribution and reduces uneven contrast effects. \\
HSVShift & Color-space transformation & Perturbs hue and saturation to simulate general color variability in RGB pathology images. \\
HEDShift & Stain-space transformation & Perturbs hematoxylin and eosin-related stain components to simulate H\&E staining variation. \\
\bottomrule
\end{tabularx}
\end{table}

\subsubsection{Experimental results on ablation study}
\label{app:aug_ablation_results}

To evaluate the impact of different data augmentation strategies on few-shot medical image classification and examine the sensitivity of diagnosis-related visual patterns to augmentation design, we conduct an ablation study. Since inappropriate augmentations may distort critical lesion structures and hinder feature learning, only the augmentation strategy is changed while all other settings remain unchanged. For a fair comparison, all methods use the same backbone network. Specifically, we compare the baseline augmentation strategy with MedRAug, CutMix, Mixup, and Cutout. The results are shown in Table \ref{tab:aug_ablation_results}.

Overall, the results show that conventional augmentation methods exhibit dataset-dependent effects. CutMix, Mixup, and Cutout improve performance in some settings on BreakHis40X, but they generally degrade performance on Pap-Smear and ISIC2018, suggesting that direct image mixing or random masking may disrupt important medical visual patterns. In contrast, MedRAug consistently achieves the best performance across all datasets and shot settings. Compared with the baseline augmentation, MedRAug improves the average accuracy by 1.69\%, 2.72\%, and 6.01\% on Pap-Smear, ISIC2018, and BreakHis40X, respectively. These results indicate that modality-aware augmentation is more effective than directly applying general-purpose augmentation strategies to few-shot medical image classification.

\begin{table}[t]
\footnotesize
\centering
\caption{Ablation study of baseline data augmentation strategies, Cutmix, Mixup, Cutout, and MedRAug, on the Pap-Smear, ISIC2018, and BreakHis40X datasets. The results are reported as mean test accuracy (\%) with their 95\% confidence intervals. }
\label{tab:aug_ablation_results}
\renewcommand{\arraystretch}{1.15}
\setlength{\tabcolsep}{6pt}
\footnotesize
\begin{tabularx}{\columnwidth}{
>{\raggedright\arraybackslash}p{1.5cm}
>{\raggedright\arraybackslash}p{1.5cm}
>{\centering\arraybackslash}p{1.0cm}
>{\centering\arraybackslash}p{1.0cm}
>{\centering\arraybackslash}p{1.0cm}
}
\toprule
Dataset & Method & 1-shot & 5-shot & 10-shot \\
\midrule

\multirow{4}{*}{Pap-Smear}
& Baseline
& 81.51 $\pm$ 0.68
& 86.78 $\pm$ 0.65
& 87.62 $\pm$ 0.61 \\
& Cutmix
& 71.37 $\pm$ 0.66
& 74.58 $\pm$ 0.65
& 80.11 $\pm$ 0.54\\
& Mixup
& 70.56 $\pm$ 0.62
& 72.39 $\pm$ 0.58
& 79.46 $\pm$ 0.51\\
& Cutout
& 73.67 $\pm$ 0.69
& 80.47 $\pm$ 0.64
& 82.56 $\pm$ 0.62 \\
& MedRAug
& \textbf{83.11 $\pm$ 0.67}
& \textbf{87.42 $\pm$ 0.65}
& \textbf{90.45 $\pm$ 0.58} \\
\midrule

\multirow{4}{*}{ISIC2018}
& Baseline
& 53.56 $\pm$ 0.65
& 58.28 $\pm$ 0.62
& 63.67 $\pm$ 0.64 \\
& Cutmix
& 49.41 $\pm$ 0.67
& 50.67 $\pm$ 0.63
& 56.02 $\pm$ 0.58\\
& Mixup
& 47.57 $\pm$ 0.74
& 52.16 $\pm$ 0.69
& 53.79 $\pm$ 0.63 \\
& Cutout
& 52.26 $\pm$ 0.71
& 53.63 $\pm$ 0.67
& 57.31 $\pm$ 0.66 \\
& MedRAug
& \textbf{55.24 $\pm$ 0.72}
& \textbf{61.42 $\pm$ 0.65}
& \textbf{67.02 $\pm$ 0.58} \\
\midrule
\multirow{4}{*}{BreakHis40X}
& Baseline
& 55.37 $\pm$ 0.83
& 69.07 $\pm$ 0.79
& 76.17 $\pm$ 0.69 \\
& Cutmix
& 61.22 $\pm$ 0.85
& 70.19 $\pm$ 0.81
& 74.18 $\pm$ 0.78 \\
& Mixup
& 62.44 $\pm$ 0.77
& 68.73 $\pm$ 0.73
& 73.71 $\pm$ 0.68 \\
& Cutout
& 62.26 $\pm$ 0.86
& 69.83 $\pm$ 0.82
& 73.16 $\pm$ 0.76 \\
& MedRAug
& \textbf{65.81 $\pm$ 0.89}
& \textbf{72.61 $\pm$ 0.84}
& \textbf{80.21 $\pm$ 0.78} \\
\bottomrule
\end{tabularx}
\end{table}

\subsubsection{Data Augmentation Strategy}
\label{app:data_augment_strategy}

\paragraph{Cutout}
Cutout is a simple yet effective data augmentation technique that improves model robustness by randomly masking out a rectangular region of the input image during training. By introducing local occlusions, Cutout prevents the model from relying excessively on a few discriminative regions and encourages it to learn more distributed and robust feature representations. This strategy can alleviate overfitting and improve generalization, especially in limited-data settings. However, for medical image analysis, the size and position of the masked region should be carefully controlled, since excessive occlusion may remove diagnostically important structures.

\paragraph{Mixup}
Mixup is a data augmentation technique based on linear interpolation, which generates virtual training samples by combining two input samples and their corresponding labels with a weighted sum, thereby enriching the training distribution and improving model generalization. Specifically, for any two samples $(x_i, y_i)$ and $(x_j, y_j)$, the mixed sample is constructed as
\[
\tilde{x}=\rho x_i+(1-\rho)x_j,\qquad
\tilde{y}=\rho y_i+(1-\rho)y_j,
\]
where $\rho \in [0,1]$ is the mixing coefficient, usually sampled from a Beta distribution:
\[
\rho \sim \mathrm{Beta}(a,a).
\]
By introducing interpolated samples into the training space, Mixup encourages the model to learn smoother decision boundaries and reduces overfitting to limited training data. In medical image analysis, this strategy can improve robustness, although excessively strong mixing may weaken diagnostically important structures, making proper selection of $a$ necessary.

\paragraph{CutMix}
CutMix is a data augmentation method that combines regional cropping with sample mixing. Unlike Mixup, which performs linear interpolation over the entire image, CutMix constructs a new sample by replacing a local rectangular region of one image with the corresponding region from another image, thereby preserving more local structural information while introducing cross-class discriminative features. Specifically, for two training samples $(x_i, y_i)$ and $(x_i, y_i)$, the augmented sample is defined as
\[
\tilde{x}=M\odot x_i+(1-M)\odot x_i,\qquad
\tilde{y}=\rho y_i+(1-\rho)y_i,
\]
where $M\in\{0,1\}^{W\times H}$ denotes a binary mask, $W$ and $H$ are the width and height of the original image, and $\odot$ represents element-wise multiplication. The label mixing coefficient $\rho$ is determined by the area ratio of the replaced region:
\[
\rho=1-\frac{wh}{WH},
\]
where $w$ and $h$ are the width and height of the replaced patch, and $WH$ denotes the total area of the original image. In this way, CutMix improves robustness to local perturbations and encourages the model to learn more effective discriminative representations. 

\newpage

For intuitive comparison, Figure~\ref{fig:augmentation_examples} shows representative examples from the Pap-Smear, ISIC2018, and BreakHis datasets under the original image, baseline data augmentation, Cutout, Mixup, CutMix and MedRAug. The figure visually demonstrates the different transformation patterns introduced by these augmentation strategies.


\end{document}

%% file: math.tex
\usepackage{amsmath}\allowdisplaybreaks
\usepackage{amsfonts,bm}

\def\1{\bf{1}}

\DeclareMathOperator*{\argmin}{arg\,min}

\usepackage{amsthm}
\usepackage{etoolbox}
\theoremstyle{plain}

\makeatletter
\def\Ddots{\mathinner{\mkern1mu\raise\p@
\vbox{\kern7\p@\hbox{.}}\mkern2mu
\raise4\p@\hbox{.}\mkern2mu\raise7\p@\hbox{.}\mkern1mu}}
\makeatother

\makeatletter
\newcommand*{\rom}[1]{\expandafter\@slowromancap\romannumeral #1@}
\makeatother

\newtheorem{theorem}{Theorem}
\newtheorem{proposition}[theorem]{Proposition}
\newtheorem{lemma}[theorem]{Lemma}

\theoremstyle{definition}
\newtheorem{definition}[theorem]{Definition}
\newtheorem{assumption}[theorem]{Assumption}
\theoremstyle{remark}

\def\0{{\bf 0}}
\def\1{{\bf 1}}

\usepackage{enumitem}
\usepackage{hhline}